%% file: iclr2026_conference.tex
\documentclass{article} 
\usepackage{iclr2026_conference,times}

\input{math_commands.tex}

\usepackage[utf8]{inputenc} 
\usepackage[T1]{fontenc}    
\usepackage{hyperref}       
\usepackage{url}            
\usepackage{booktabs}       
\usepackage{amsfonts}       
\usepackage{nicefrac}       
\usepackage{microtype}      
\usepackage{xcolor}         
\usepackage{algorithmic}
\usepackage{algorithm}
\usepackage{subfigure}
\usepackage{wrapfig}

\usepackage{hyperref}
\usepackage{bm}

\usepackage{amsmath}
\usepackage{amssymb}
\usepackage{mathtools}
\usepackage{amsthm}
\usepackage{multirow}

\usepackage{mathrsfs}
\usepackage{graphicx}
\usepackage{subcaption}

\title{GlobeDiff: State Diffusion Process for Partial Observability in Multi-Agent Systems}

\iclrfinalcopy

\author{
    Yiqin Yang$^1$\footnotemark[1], Xu Yang$^2$\footnotemark[1], Yuhua Jiang$^2$, Ni Mu$^2$, Hao Hu$^3$, Runpeng Xie$^1$, Ziyou Zhang$^2$, \\
    \textbf{Siyuan Li$^5$, Yuan-Hua Ni$^4$, Qianchuan Zhao$^2$, Bo Xu$^1$\footnotemark[2]}\\
    $^1$The Key Laboratory of Cognition and Decision Intelligence for Complex Systems, \\ 
    \ \ Institute of Automation, Chinese Academy of Sciences \\
    $^2$Tsinghua University \quad $^3$Moonshot AI \quad $^4$Nankai University\\
    $^5$Faculty of Computing, Harbin Institute of Technology
}

\begin{document}

\maketitle

\renewcommand{\thefootnote}{\fnsymbol{footnote}}
\footnotetext[1]{These authors contributed equally.}
\footnotetext[2]{Corresponding author.}

\maketitle
\input{text/0_abstract}

\input{text/1_intro}
\input{text/2_related_work}
\input{text/3_preliminaries}
\input{text/4_method}
\input{text/5_experiments}

\input{text/6_conclusion}

\bibliography{iclr2026_conference}
\bibliographystyle{iclr2026_conference}

\appendix
\input{appendix/alg}
\input{appendix/proof_new_2}
\input{appendix/additional_results}
\input{appendix/experimental_details}



\end{document}

%% file: math_commands.tex
\usepackage{amsmath,amsfonts,bm}

\def\eqref#1{equation~\ref{#1}}

\def\1{\bm{1}}

\DeclareMathAlphabet{\mathsfit}{\encodingdefault}{\sfdefault}{m}{sl}
\SetMathAlphabet{\mathsfit}{bold}{\encodingdefault}{\sfdefault}{bx}{n}



%% file: text/0_abstract.tex
\begin{abstract}
\label{abstract}
In the realm of multi-agent systems, the challenge of \emph{partial observability} is a critical barrier to effective coordination and decision-making.
Existing approaches, such as belief state estimation and inter-agent communication, often fall short. Belief-based methods are limited by their focus on past experiences without fully leveraging global information, while communication methods often lack a robust model to effectively utilize the auxiliary information they provide.
To solve this issue, we propose Global State Diffusion Algorithm~(GlobeDiff) to infer the global state based on the local observations.
By formulating the state inference process as a multi-modal diffusion process, GlobeDiff overcomes ambiguities in state estimation while simultaneously inferring the global state with high fidelity.
We prove that the estimation error of GlobeDiff under both unimodal and multi-modal distributions can be bounded.
Extensive experimental results demonstrate that GlobeDiff achieves superior performance and is capable of accurately inferring the global state.
\end{abstract}

%% file: text/1_intro.tex
\section{Introduction}
\label{intro}

Multi-Agent Reinforcement Learning (MARL) has driven significant progress in complex domains like robotics~\citep{wang2022distributed, lee2022marl} and autonomous systems~\citep{zhang2024multi, zhou2022multi}, enabling agents to learn sophisticated collaborative policies~\citep{feng2024hierarchical, wang2024multi}.  However, a fundamental and persistent barrier to effective multi-agent coordination is the problem of partial observability (PO), where each agent's view is limited~\citep{amato2013decentralized,omidshafiei2017deep,srinivasan2018actor}, and the true global state of the system is unknown.  This challenge, formally captured in the Decentralized Partially Observable Markov Decision Process (Dec-POMDP) framework~\citep{oliehoek2016concise}, forces agents to act under uncertainty, often leading to suboptimal or conflicting decisions~\citep{spaan2012partially}.

The core difficulty of partial observability lies in the profound ambiguity it creates: a single agent's local observation can be consistent with numerous, often dramatically different, global states.  This creates a challenging one-to-many mapping problem for state inference.  Existing approaches have attempted to resolve this ambiguity using discriminative models, such as recurrent networks or Transformers~\citep{hausknecht2015deep,kapturowski2018recurrent}, which learn to predict a single, most likely global state from a history of local observations.  However, this approach is fundamentally flawed.  By collapsing a rich distribution of possibilities into a single point estimate, these methods suffer from mode collapse.  They either average distinct plausible states into a single, nonsensical representation or arbitrarily commit to one possibility while ignoring others, failing to capture the true uncertainty of the environment.

The central thesis of this paper is that the one-to-many ambiguity inherent in global state inference is best addressed not by discriminative prediction, but by generative modeling~\citep{goodfellow2014generative,ho2020denoising,song2020score}.  Instead of forcing a collapse to a single mode, a generative approach can learn the entire conditional distribution of plausible global states.  This allows an agent to reason over the full spectrum of possibilities, a capability that is critical for robust decision-making under uncertainty.  By sampling from this learned distribution, our method can generate high-fidelity hypotheses about the global state, directly confronting the multi-modality of the problem.

To realize this vision, we introduce the Global State Diffusion Algorithm (GlobeDiff), a novel framework that operationalizes this generative insight using a conditional diffusion model.  GlobeDiff formulates global state inference as a denoising process, learning to reverse a diffusion process that gradually corrupts the global state into noise.  Conditioned on an agent's local information (such as its own observations or communicated messages~\citep{kim2019learning, jiang2018learning}), GlobeDiff can generate a diverse and realistic set of potential global states.  This approach not only provides a more principled solution to the one-to-many mapping problem but also integrates seamlessly into existing MARL frameworks like Centralized Training with Decentralized Execution (CTDE).  Our contributions are:
\begin{itemize}
\item We identify and frame the core challenge of partial observability as a one-to-many mapping problem, highlighting the limitations of existing discriminative approaches.
\item We propose GlobeDiff, the framework to leverage conditional diffusion models for generative global state inference in MARL, offering a robust solution to mode collapse.
\item We empirically demonstrate that GlobeDiff significantly outperforms state-of-the-art baselines on challenging multi-agent benchmarks, validating the power of the generative approach.
\end{itemize}

%% file: text/2_related_work.tex
\begin{figure*}[t]
    \centering
    \includegraphics[width=1.0\linewidth]{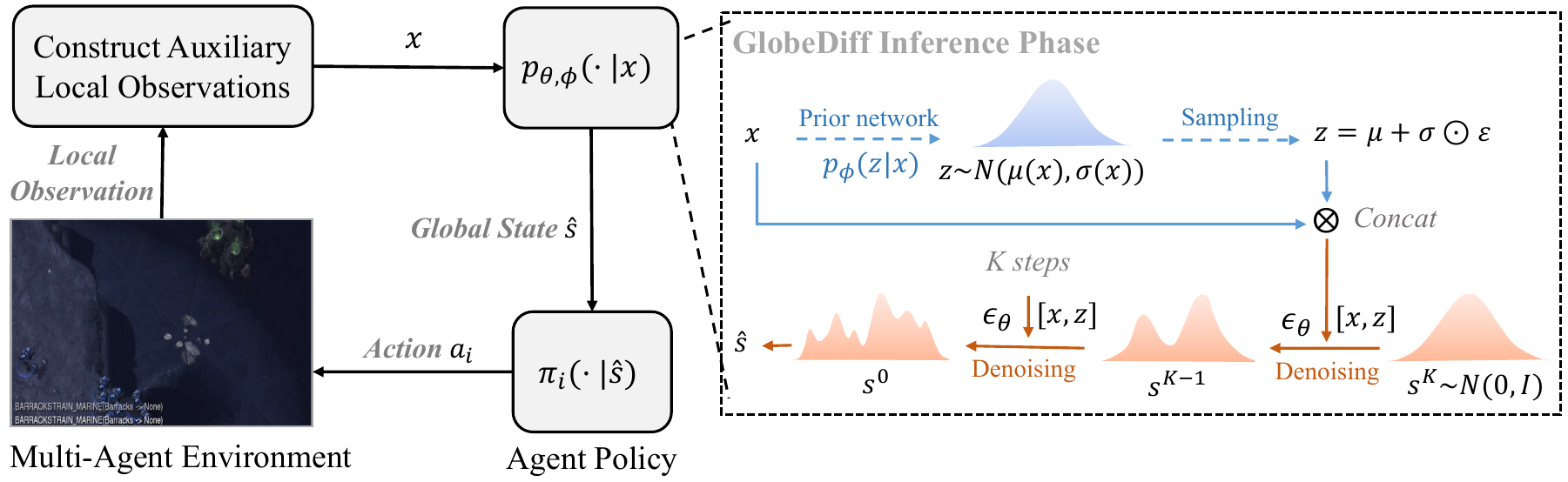}
    \caption{\textbf{The overall framework of GlobeDiff.}
    During the execution phase, we first construct auxiliary local observations $x$ and then infer the global state $\hat{s}$ using GlobeDiff.
    Agents make decisions based on the inferred global state $\hat{s}$.
    }
    \label{fig:framework}
    \vspace{-10pt}
\end{figure*}

\section{Related Work}
\label{related}

\paragraph{Partial Observability}
To solve the PO problem, particularly in Dec-POMDPs, existing research can be divided into two categories: belief state estimation and explicit communication.
First, to model uncertainty in multi-agent systems, the concept of belief state has been introduced to estimate the state of the environment or other agents~\citep{macdermed2013point,muglich2022generalized,varakantham2006winning}. 
For example, given the effectiveness in handling temporal sequences, RNNs are used to integrate local observation histories over time, providing agents with long-term memory~\citep{hausknecht2015deep,kapturowski2018recurrent,wen2022multi}. 
However, estimation errors accumulate over time, leading to insufficient information in complex systems and hindering a comprehensive understanding of the global state. 
In contrast to inferring or estimating the global belief state, inter-agent communication has been introduced to directly acquire information from other agents and expand the receptive field of individual agents~\citep{das2019tarmac,singh2018learning,zhang2019efficient,kim2019learning,jiang2018learning}. 
However, these approaches suffer from high communication costs and complex protocol design. 

\begin{figure*}[t]
    \centering
    \includegraphics[width=0.9\linewidth]{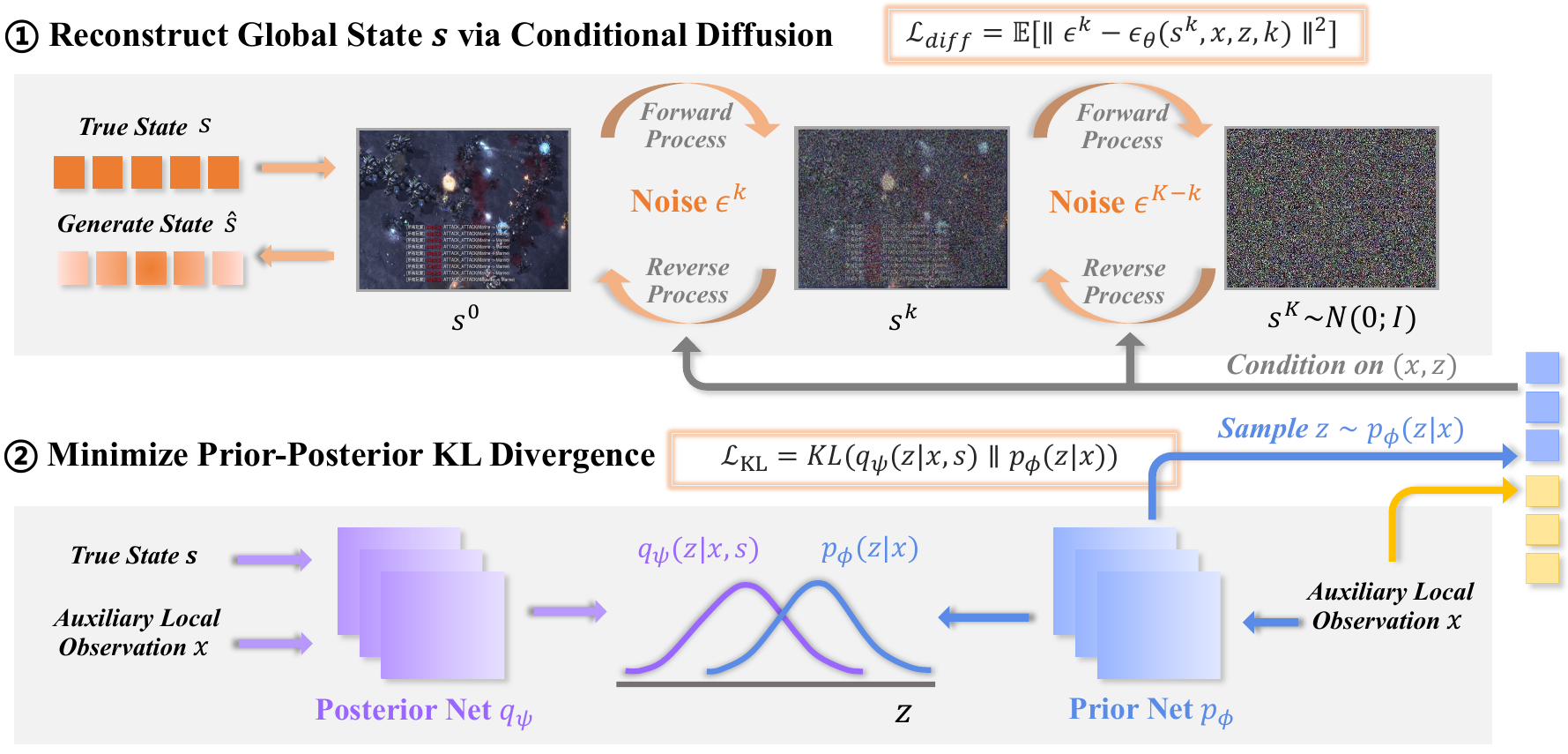}
    \caption{The training process of Globediff is divided into two parts: minimizing the difference between the prior network $p_{\phi}$ and the posterior network $q_{\psi}$, and then training the diffusion model based on the forward and backward process.}
    \label{fig:training}
\end{figure*}

\paragraph{Diffusion Model for RL}
Diffusion models leverage a denoising framework and effectively reverse multi-step noise processes to generate new data~\citep{ho2020denoising,song2020score}.
These models have increasingly been integrated into sequential decision-making tasks to improve performance and sample efficiency, particularly in single-agent and offline reinforcement learning. 
For example, diffusion models have been employed as planners, encoding dynamic environmental information and generating multi-step optimal trajectories~\citep{janner2022planning,liang2023adaptdiffuser,he2023diffusion,ajayconditional,chi2023diffusion}. 
This integration helps mitigate compound errors in auto-regressive sequence planning and facilitates better long-term decision-making.
Additionally, diffusion models have recently been applied to address the PO problem in multi-agent systems~\citep{wang2024diffusion}. However, they focuses on approximating belief distributions via shared attractors, but do not explicitly model the intrinsic one-to-many mapping from local observations to the global state.

%% file: text/3_preliminaries.tex
\section{Preliminaries}
\paragraph{Dec-POMDPs}
We consider Dec-POMDP as a standard model consisting of a tuple $\mathcal{G}=<\mathcal{S},\mathcal{A},\mathcal{P},\mathcal{R},\mathcal{U},\mathcal{O},\gamma>$ for cooperative multi-agent tasks. Within $\mathcal{G}$, $s\in \mathcal{S}$ denotes the global state of the environment. Each agent $i\in N:={1,...,n}$ chooses an action $a_i\in \mathcal{A}$ at each time, forming a joint action $\bm{a}\in \mathcal{A}^n$. The state transition function $\mathcal{P}(s'|s,\bm{a}):\mathcal{S}\times \mathcal{A}^n\times \mathcal{S}\to [0,1]$ gives a transition to the environment. The reward function $\mathcal{R}(s,\bm{a}):\mathcal{S}\times \mathcal{A}^n\to \mathbb{R}$ is shared among all agents and $\gamma \in [0,1)$ is the discount factor. 

In a partially observable scenario, each agent has individual observations $o\in \mathcal{O}$ according to the observation function $\mathcal{U}(s,a):\mathcal{S}\times \mathcal{A} \to \mathcal{O}$. Each agent makes decision based on a stochastic policy $\pi_{\vartheta_i}(a_i\mid o_i)$ parameterized by $\vartheta_i$: $\mathcal{O}\times \mathcal{A} \to [0,1]$. The joint value function can be defined as $Q^{\bm{\pi}}_{tot}(s_t,\bm{a}_t) = \mathbb{E}_{s_{t+1}:\infty,\bm{a}_{t+1}:\infty}[\sum_{i=0}^{\infty}\gamma^i r_{t+i}| s_t=s,\bm{a}_t=\bm{a},\bm{\pi}]$, where $\bm{\pi}$ is a joint policy with parameters $\vartheta=<\vartheta_1,...,\vartheta_n>$.

\paragraph{Generative Model for Global State Inference}
We aim to learn a mapping from the auxiliary local observations $x$ to the global state $s$ based on the generative model $p_{\theta}(s\mid x)$, where auxiliary local observations $x$ are composed of local observations $o$.
Therefore, agents can make decisions based on the global state $s$ rather than the local observations $o$ during execution by leveraging the generative model $p_{\theta}$, thereby overcoming the limitations of partial observability.
In this work, we consider two scenarios for generating auxiliary states. First, if local observations $o$ are information-rich, we can infer the global state based on the individual agent's historical trajectory, in which case the auxiliary local observations $x_t$ is formulated as the integration of observations $o^i_t$ over the past $m$ steps:
\begin{equation}
    \begin{aligned}
    \begin{matrix}
    x_{t} = \{o_{t-m}^i, & o_{t-m+1}^i, & \cdots, o_{t}^i\},\\
    \end{matrix}
    \end{aligned}
    \label{eq: ind}
\end{equation}
where $o_{t}^i$ denotes the local observation of agent $i$ at time step $t$.
On the other hand, if local observations provide limited information, it becomes challenging to infer the global state based on the individual agent's historical trajectory. In such scenarios, we enable communication between agents, and consequently, the auxiliary local observation $x_t$ is constructed from their joint observations:
\begin{equation}
    \begin{aligned}
    \begin{matrix}
    x_{t} = \{o_{t}^1, & o_{t}^2, & \cdots, o_{t}^n\}.\\
    \end{matrix}
    \end{aligned}
    \label{eq: com}
\end{equation}

%% file: text/4_method.tex
\newtheorem{theorem}{Theorem}
\newtheorem{lemma}{Lemma}
\newtheorem{definition}{Definition}
\section{Method}
\label{method}

Our methodological approach is designed to tackle the fundamental ambiguity of partial observability: a single local observation $x$ can correspond to many different, yet plausible, global states $s$.         A naive conditional generative model $p(s|x)$ would struggle with this one-to-many mapping, likely averaging over the possibilities and producing a blurry global state.         To address this, we introduce a key architectural choice: a latent variable $z$.
The intuition is to use $z$ as a mode selector.         Instead of asking the model to solve the ambiguous problem of generating $s$ from $x$, we ask it to solve the well-posed problem of generating $s$ from both $x$ and $z$.         The latent variable $z$ provides the specific context needed to select one particular plausible global state from the distribution of possibilities.         This transforms the problem into learning a conditional diffusion model $p(s|x, z)$.

This design, however, introduces a new challenge: how do we obtain a meaningful $z$ during inference when we only have the local observation $x$?         We solve this by bridging the gap between training and inference.         During training, we have access to the ground-truth global state $s$, which allows us to train an posterior network, $q(z|x, s)$, that learns the ideal $z$ required to reconstruct $s$ from $x$.         For inference, we train a separate prior network, $p(z|x)$, which predicts a useful $z$ using only $x$.   The following sections will detail the mathematical formulation of this diffusion process.




\subsection{Global State Diffusion Process}
\subsubsection{Training}
Inspired by nonequilibrium thermodynamics~\citep{sohl2015deep}, we attempt to formulate $p_{\theta}(s\mid x)$ as a diffusion process.
However, since the observation function $\mathcal{U}$ does not assume the unique mapping between elements of the input set $(\mathcal{S}\times\mathcal{A})$ and output set $\mathcal{O}$, different global states may be mapped to the same local observation.
Therefore, when inferring global states from local observations, the ambiguity issue caused by the non-unique mapping significantly decreases the accuracy of global state inference.
To address this issue, we introduce a latent variable $z$ that allows the diffusion process to map single input $x$ to multiple outputs $s$, that is modeling a one-to-many conditional generative distribution $p_{\theta,\phi}(s\mid x)$:

\begin{equation}
    \begin{aligned}
        p_{\theta,\phi}(s\mid x)=\int p_{\theta}(s\mid x,z)p_{\phi}(z\mid x)dz,
    \end{aligned}
\end{equation}
where $p_{\phi}(z\mid x)$ is a conditional prior.
Then, we introduce an approximate posterior $q_{\psi}(z\mid x,s)$ and derive the following equation based on Jensen's inequality~\citep{2014Auto}:
\begin{equation}
    \begin{aligned}
        \log p_{\theta,\phi}(s\mid x) &= \log \int p_{\theta}(s\mid x,z)p_{\phi}(z\mid x)dz\\
        &= \log \int q_{\psi}(z\mid x,s)\frac{p_{\theta}(s\mid x,z)p_{\phi}(z\mid x)}{q_{\psi}(z\mid x,s)}dz\\
        &\geq \mathbb{E}_{q_{\psi}}\left[\log\frac{p_{\theta}(s\mid x,z)p_{\phi}(z\mid x)}{q_{\psi}(z\mid x,s)}\right] \\
        &=\mathbb{E}_{q_{\psi}}[\log p_{\theta}(s\mid x,z)] - \textit{\rm KL}(q_{\psi}(z\mid x,s)\|p_{\phi}(z\mid x)).
    \end{aligned}
\end{equation}

For $p_{\theta}(s\mid x,z)$, in the forward process, we first sequentially introduce Gaussian noise $\epsilon$ to the global state $s$ according to the predefined variance:
\begin{equation}
    \begin{aligned}
        q(s^k\mid s^{k-1})=\mathcal{N}(s^k; \sqrt{1-\beta^{k}}s^{k-1},\beta^k \mathbf{I}),
    \end{aligned}
\end{equation}
where $k\in\{0,...,K\}$ is the diffusion timestep, $\beta^k$ is the variance parameter, $s^0$ is the original state and $s^k$ is the state corrupted with $k$-step noise.
For any $s^k$, we compute it from the original state $s^0$ without intermediate steps:
\begin{equation}
    \begin{aligned}
        s^k = \sqrt{\overline{\alpha}^k}s^0 + \sqrt{1-\overline{\alpha}^k}\epsilon(s^k,k),
    \end{aligned}
    \label{eq:addnoise}
\end{equation}
where $\epsilon(s^k,k)\sim \mathcal{N}(\mathbf{0},\mathbf{I})$ is the $k$-th step noise of the forward process and $\overline{\alpha}^k=\Pi_{i=1}^{k}\alpha^i$ with $\alpha^k=1-\beta^k$.
Then, we represent the global state reference via the reverse process of the diffusion model as
\begin{equation}
    \begin{aligned}
        p_{\theta}(s\mid x,z)=p_{\theta}(s^{0:K}\mid x,z)=\mathcal{N}(s^K;\mathbf{0},\mathbf{I})\Pi_{k=1}^{K}p_{\theta}(s^{k-1}\mid s^k,x,z),
    \end{aligned}
\end{equation}
where the end sample of the reverse chain $s^0$ is the restored global state.
Generally, $p_{\theta}(s^{k-1}\mid s^k,x,z)$ could be modeled as a Gaussian distribution $\mathcal{N}(s^{k-1};\mu_{\theta}(s^k,x,z,k),\sigma_{\theta}(s^k,x,z,k))$.
We follow~\cite{ho2020denoising} to parameterize $p_{\theta}(s^{k-1}\mid s^k,x,z)$ as a noise prediction model with the covariance matrix fixed as $\sigma_{\theta}(s^k,x,z,k)=\beta^k\mathbf{I}$ and mean constructed as
\begin{equation}
    \begin{aligned}
        \mu_{\theta}(s^k,x,z,k)&=\frac{1}{\sqrt{\alpha^k}}\left(s^k-\frac{\beta^k}{\sqrt{1-\overline{\alpha}^k}}\epsilon_{\theta}(s^k,x,z,k)\right).
    \end{aligned}
\end{equation}
We first sample $s^K\sim  \mathcal{N}(\mathbf{0},\mathbf{I})$ and then form the reverse diffusion chain parameterized by $\theta$ as 

\begin{equation}
    \begin{aligned}
        \label{eq: reverse}
        s^{k-1}\mid s^k = \frac{1}{\sqrt{\alpha^k}}\left(s^k - \frac{\beta^k}{\sqrt{1-\overline{\alpha}^k}}\epsilon_{\theta}(s^k, x, z, k)\right) + \sqrt{\beta^k} \epsilon, \quad \epsilon\sim \mathcal{N}(\mathbf{0},\mathbf{I}).
    \end{aligned}
\end{equation}
Therefore, the global state diffusion process is trained by minimizing the following loss function:
\begin{equation}
    \begin{aligned}
        \mathcal{L}(\theta, \phi, \psi)&=\mathbb{E}_{k\sim\mathcal{U},\epsilon\sim  \mathcal{N}(\mathbf{0},\mathbf{I}),(s,x)\sim \mathcal{D},z\sim q_{\psi}}\left[\|
        \epsilon - \epsilon_{\theta}\left(\sqrt{\overline{\alpha}^k}s + \sqrt{1-\overline{\alpha}^k}\epsilon, x,z,k\right)
        \|^2\right] + \\ 
        & \qquad \qquad \qquad \qquad \beta_{\rm KL} \textit{\rm KL}(q_{\psi}(z\mid x,s)\|p_{\phi}(z\mid x)),\\
    \end{aligned}
    \label{eq: objective}
\end{equation}
where $\mathcal{U}$ is a uniform distribution over the discrete set as $\{1,...,K\}$, $\mathcal{D}$ denotes the datasets and $\beta_{\rm KL}$ is a hyperparameter.
The overall training process of GlobeDiff is shown in Figure~\ref{fig:training}.

\subsubsection{Inference}
In the inference phase, each agent first obtains the latent variable $z$ via the encoder $p_{\phi}(z\mid x)$.
Then, each agent initializes $s^K \sim \mathcal{N}(\mathbf{0}, \mathbf{I})$ and performs $K$ iterative denoising sampling steps.
Note that $s^K$ is initialized as Gaussian noise, and $s^0$ obtained after $K$ denoising steps is the inferred global state. Crucially, no global information is utilized throughout the inference process.
Consistent with the reverse process in Equation~\ref{eq: reverse}, the inference at the $k$-step is performed as follows:
\begin{equation}
s^{k-1} = \frac{1}{\sqrt{\alpha^k}}\left(s^k - \frac{\beta^k}{\sqrt{1-\overline{\alpha}^k}}\epsilon_{\theta}(s^k, x, z, k)\right) + \sqrt{\beta^k} \epsilon,
\label{eq: infer}
\end{equation}
where $\epsilon \sim \mathcal{N}(\mathbf{0}, \mathbf{I})$ represents standard Gaussian noise.
After $K$ inference steps, $s^0$ becomes the inferred global state, where each agent makes decisions by $a_{i} = \pi_{\vartheta_i}(\cdot \mid s^0)$.

The aforementioned global state diffusion process exhibits the following characteristics. First, the diffusion process does not explicitly model the distribution of generated samples but implicitly learns it through the denoising network $\epsilon_{\theta}$.
Therefore, the marginal of the reverse diffusion chain provides an expressive distribution that can capture complex distribution properties.
Second, the proposed global state diffusion process can model the non-unique mapping relationship between local observations and the global state.
Finally, the global state inference is conditioned on the auxiliary state, enabling sampling those global states relevant to local observations.

\subsection{Theoretical Analysis}
Let $\hat{s}$ denote the global state generated from GlobeDiff.
When the observation function $\mathcal{U}$ is injective, the mapping between $s$ and $x$ is one-to-one.
Assuming that the denoising network $\epsilon_{\theta}$ and prior network $p_\phi$ are well trained, we prove that the estimation error of GlobeDiff can be bounded:

\begin{theorem}[Single-Sample Expectation Error Bound with Latent Variable]
\label{thm:single}
Assume the trained model satisfies the following two assumptions. (1) Diffusion noise prediction MSE: \( \mathbb{E}_{s^k,x,z,k}[\| \epsilon_\theta(s^k,x,z,k) - \epsilon \|^2] \leq \delta^2 \), (2) Prior alignment: \( D_{\text{KL}}(p_\phi(z\mid x) \| p(z\mid x)) \leq \varepsilon_{\text{KL}} \).
Then, for any generated sample \( \hat{s} \sim p_{\theta,\phi}(s\mid x) = \int p_\theta(s\mid x,z) p_\phi(z\mid x) dz \) and true sample \( s \sim p(s\mid x) \), the expected squared error is bounded by:
\begin{equation}
    \mathbb{E}\left[ \| \hat{s} - s \|^2 \right] \leq 2W_2^2(p_{\theta,\phi}(s\mid x), p(s\mid x)) + 4\text{Var}(s\mid x),
\end{equation}
where \( W_2 \) is the 2-Wasserstein distance between \( p_{\theta,\phi}(s\mid x) \) and \( p(s\mid x) \), \(\text{Var}(s\mid x) = \mathbb{E}_{p(s\mid x)}\left[\| s - \mu_{s\mid x} \|^2\right] \) is the conditional variance and \(\mu_{s\mid x} = \mathbb{E}_{p(s\mid x)}[s] \) is the conditional mean.
\end{theorem}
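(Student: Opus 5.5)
The bound does not actually use assumptions (1) and (2) quantitatively; they only guarantee that $W_2(p_{\theta,\phi}(s\mid x),p(s\mid x))$ is finite and small, so the inequality itself is a purely probabilistic statement about two distributions. I would fix $x$ and take $\hat{s}\sim p_{\theta,\phi}(\cdot\mid x)$ and $s\sim p(\cdot\mid x)$ \emph{independent}, which is the natural reading of ``generated sample'' versus ``true sample.'' I would then insert an auxiliary variable through an optimal coupling. Let $\gamma^\star$ be a $W_2$-optimal coupling of $p_{\theta,\phi}(\cdot\mid x)$ and $p(\cdot\mid x)$; it exists by standard optimal transport theory, given finite second moments. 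Realize $(\hat{s},\tilde{s})\sim\gamma^\star$ on an enlarged probability space, with $s$ drawn independently of $(\hat{s},\tilde{s})$. Then $\tilde{s}$ has law $p(\cdot\mid x)$ and $\mathbb{E}\|\hat{s}-\tilde{s}\|^2=W_2^2$.

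The main step is the split $\hat{s}-s=(\hat{s}-\tilde{s})+(\tilde{s}-s)$ together with $\|a+b\|^2\le 2\|a\|^2+2\|b\|^2$. This gives
\begin{equation*}
\mathbb{E}\|\hat{s}-s\|^2\le 2\,\mathbb{E}\|\hat{s}-\tilde{s}\|^2+2\,\mathbb{E}\|\tilde{s}-s\|^2 = 2W_2^2\bigl(p_{\theta,\phi}(s\mid x),p(s\mid x)\bigr)+2\,\mathbb{E}\|\tilde{s}-s\|^2 .
\end{equation*}
Since $\tilde{s}$ and $s$ are independent copies from $p(\cdot\mid x)$, I would expand around $\mu_{s\mid x}$. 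The cross term vanishes by independence and zero mean, so $\mathbb{E}\|\tilde{s}-s\|^2=2\,\mathrm{Var}(s\mid x)$. Substituting gives exactly $2W_2^2+4\,\mathrm{Var}(s\mid x)$. Alternatively, applying the same inequality to $(\tilde{s}-\mu)+(\mu-s)$ also yields the bound $4\,\mathrm{Var}$ without invoking independence. This is useful if one does not want to insist that $s$ and $\tilde{s}$ be independent.

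The main subtlety, and the step I expect to need care, is justifying the coupling construction. Because the left-hand side depends only on the joint law of $(\hat{s},s)$, I must check that gluing $\gamma^\star$ with an independent copy leaves the law of $(\hat{s},s)$ equal to the product of marginals. It does, because $s$ is independent of $\hat{s}$ and has the correct marginal. If instead the statement is read with an arbitrary coupling of $\hat{s}$ and $s$, the second route via $\mu$ handles it. In that case I would bound $\mathbb{E}\|\tilde{s}-s\|^2$ by $2\mathbb{E}\|\tilde{s}-\mu\|^2+2\mathbb{E}\|s-\mu\|^2=4\,\mathrm{Var}$, using the gluing lemma to place $\tilde{s}$ and $s$ on a common space. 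That still yields $2W_2^2+8\,\mathrm{Var}$, which is weaker than stated. For that reason I would commit to the independent-sample interpretation, where the bound holds. Finally, I would remark that assumptions (1)--(2) enter only through controlling $W_2$, for instance via known diffusion $W_2$ bounds in terms of $\delta$ and a Pinsker/Talagrand-type control from $\varepsilon_{\mathrm{KL}}$. This remark is not needed for the stated inequality.
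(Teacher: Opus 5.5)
Your proof is correct, and it differs from the paper's in a way that matters. The paper centers at $\mu_{s\mid x}$. It writes $\mathbb{E}\|\hat s-s\|^2\le 2\,\mathbb{E}\|\hat s-\mu_{s\mid x}\|^2+2\,\mathrm{Var}(s\mid x)$, then bounds $\mathbb{E}\|\hat s-\mu_{s\mid x}\|^2=W_2^2(p_{\theta,\phi},\delta_{\mu_{s\mid x}})$ by $W_2^2(p_{\theta,\phi},p)+W_2^2(p,\delta_{\mu_{s\mid x}})=W_2^2+\mathrm{Var}$. That is, it applies the triangle inequality to \emph{squared} Wasserstein distances. This step is invalid. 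The triangle inequality only gives $W_2(p_{\theta,\phi},\delta_{\mu_{s\mid x}})^2\le(W_2+\sqrt{\mathrm{Var}})^2\le 2W_2^2+2\mathrm{Var}$. For $p=\mathcal{N}(0,1)$ and $p_{\theta,\phi}=\mathcal{N}(0,4)$, the paper's inequality would read $4\le 1+1$. The paper's argument also never uses independence of $\hat s$ and $s$, so if it were valid it would prove the bound for every coupling. That is false: take $p_{\theta,\phi}=\mathcal{N}(0,9)$, $p=\mathcal{N}(0,1)$ and $\hat s=-3s$, which gives $\mathbb{E}\|\hat s-s\|^2=16>12=2W_2^2+4\mathrm{Var}$. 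So your decision to commit to independent samples is not a matter of taste; it is what makes the statement true. Your gluing of an optimal coupling with an independent copy delivers exactly $2W_2^2+4\mathrm{Var}$. Your $2W_2^2+8\mathrm{Var}$ bound for arbitrary couplings is in fact attained in this example. The paper's centering idea can be repaired in two ways. Without independence, it yields $4W_2^2+6\mathrm{Var}$. With independence, the cross term vanishes, giving $\mathbb{E}\|\hat s-s\|^2=\mathbb{E}\|\hat s-\mu_{s\mid x}\|^2+\mathrm{Var}\le 2W_2^2+3\mathrm{Var}$. This is slightly sharper than your bound and avoids building the three-variable coupling explicitly. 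One small fix to your write-up: the sentence saying the split $(\tilde s-\mu)+(\mu-s)$ ``also yields the bound $4\,\mathrm{Var}$ without invoking independence'' suggests it recovers the stated result. It only bounds $\mathbb{E}\|\tilde s-s\|^2$ by $4\mathrm{Var}$, hence $2W_2^2+8\mathrm{Var}$ overall, as you note later; reword or drop it.
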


\begin{proof}
Please refer to Appendix~\ref{proof:single} for the detailed proof.
\end{proof}

However, in practical scenarios, the mapping from $x$ to $s$ is typically one-to-many. We model this situation using a multi-modal Gaussian distribution, as presented in Theorem~\ref{thm:multi}, and we prove that the error between the estimated state and the centers of the multi-modal distribution also admits a bounded error.


\begin{theorem}[Multi-Modal Error Bound with Latent Variable]\label{thm:multi}
Under the following conditions: 
(1) The true conditional distribution \( p(s\mid x) = \sum_{i=1}^N w_i \mathcal{N}(s; \mu_i(x), \Sigma_i(x)) \) has \( N \) modes with minimum inter-mode distance \( D = \min_{i \neq j} \|\mu_i(x) - \mu_j(x)\| \geq 2\sqrt{d} \).
(2) Mode separation condition: \( D > 4\sqrt{C_1K\delta^2 + C_2\varepsilon_{\text{KL}} + \max_i \text{Tr}(\Sigma_i(x))} \)
(3) The model satisfies \( \mathbb{E}[\|\epsilon_\theta - \epsilon\|^2] \leq \delta^2 \) and \( D_{\text{KL}}(p_\phi(z\mid x) \| p(z\mid x)) \leq \varepsilon_{\text{KL}} \). Then, for any generated sample \( \hat{s} \sim p_{\theta,\phi}(s\mid x) \), there exists a mode \( \mu_j(x) \) such that:
\begin{equation}
\mathbb{E}\left[\|\hat{s} - \mu_j(x)\|^2\right] \leq C_1K\delta^2 + C_2\varepsilon_{\text{KL}} + 2\max_i \text{Tr}(\Sigma_i(x)) + \mathcal{O}\left(e^{-D^2/(8\sigma_{\text{max}}^2)}\right),
\end{equation}
where \( \sigma_{\text{max}}^2 = \max_i \text{Tr}(\Sigma_i(x)) \), and \( C_1, C_2 \) are constants depending on the diffusion scheduler and latent space geometry.

\end{theorem}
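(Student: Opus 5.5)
We bound the error through a Wasserstein coupling and a mode-assignment argument. First we control $W_2(p_{\theta,\phi}(s\mid x),p(s\mid x))$ using the two model-error assumptions in condition (3). Then we show a generated sample lands, with overwhelming probability, in the Voronoi cell of a single mode. Finally we bound its squared distance to that mode's center.

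\textbf{Step 1: a Wasserstein bound.} We split the total discrepancy through the intermediate distribution $\int p_\theta(s\mid x,z)\,p(z\mid x)\,dz$. For the diffusion part, we follow the standard discretized reverse-SDE analysis. Comparing the learned reverse chain of Equation~\ref{eq: reverse} with the true reverse chain step by step, each step contributes an error proportional to $\beta^k/(1-\overline{\alpha}^k)\cdot\|\epsilon_\theta-\epsilon\|^2$. Summing over the $K$ steps, with the scheduler constants absorbed, gives a squared contribution of at most $C_1 K\delta^2$. For the latent part, we bound the transport cost induced by replacing $p(z\mid x)$ with $p_\phi(z\mid x)$. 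Assuming the decoder $z\mapsto p_\theta(\cdot\mid x,z)$ is Lipschitz in $W_2$ and the latent space is bounded, Pinsker's inequality (or Talagrand's $T_2$ inequality) gives a squared contribution of at most $C_2\varepsilon_{\text{KL}}$. Hence $W_2^2 \le C_1K\delta^2 + C_2\varepsilon_{\text{KL}}$, after redefining constants to absorb the factor $2$ from $(a+b)^2\le 2a^2+2b^2$.

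\textbf{Step 2: mode assignment.} Take an optimal coupling $(\hat s, s)$. Write $s=\mu_I(x)+\xi$, where $I\sim w$ is the mixture index and $\xi\sim\mathcal N(0,\Sigma_I)$. Then
\[
\mathbb{E}\|\hat s-\mu_I(x)\|^2 \le 2\,\mathbb{E}\|\hat s-s\|^2 + 2\,\mathbb{E}\|\xi\|^2 \le 2W_2^2 + 2\max_i\mathrm{Tr}(\Sigma_i(x)).
\]
This already has the right form up to constants. To obtain a single deterministic $j$, we define $j$ as the mode nearest to $\hat s$ and use $\|\hat s-\mu_j\|\le\|\hat s-\mu_I\|$. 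We need to show $j=I$ except on an exponentially small event. By condition (2), $D/4$ exceeds the root-mean-square displacement $\sqrt{C_1K\delta^2+C_2\varepsilon_{\text{KL}}+\sigma_{\max}^2}$. Misassignment requires $\|\hat s-\mu_I\|\ge D/2$. We split this into the event $\|\xi\|\ge D/4$, which is controlled by Gaussian concentration (Hanson--Wright or a chi-square tail) and is $\mathcal O(e^{-D^2/(8\sigma_{\max}^2)})$ once $D\ge 2\sqrt d$ dominates the mean $\sqrt{\mathrm{Tr}\Sigma}$, and the event $\|\hat s-s\|\ge D/4$. On the misassignment event we still have $\|\hat s-\mu_j\|\le\|\hat s-\mu_I\|$, so we only need a truncated second moment there. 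Cauchy--Schwarz times the exponential tail probability places it inside the $\mathcal O(\cdot)$ term.

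\textbf{Main obstacle.} The hardest step is Step 1's diffusion term, specifically making $C_1K\delta^2$ honest. The assumption is an averaged noise-prediction MSE, and converting it into a $W_2$ (not merely KL) bound normally requires Lipschitz regularity of $\epsilon_\theta$ in $s^k$ to control how errors propagate along the chain. I would first try a Grönwall-style per-step recursion using a Lipschitz constant $L$ with $\prod_k(1+L\beta^k)$ absorbed into $C_1$. If that fails, I would fall back on a KL bound via Girsanov, of order $K\delta^2$, followed by a $T_2$ transport inequality. A secondary difficulty is that the tail of $\|\hat s-s\|$ is controlled only in mean square, so it is not exponential. To keep the final bound clean, I would fold its Markov-type contribution into the $C_1K\delta^2+C_2\varepsilon_{\text{KL}}$ terms rather than into the exponential term.
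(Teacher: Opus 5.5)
Your argument is correct, and it reaches the bound by a different and cleaner route than the paper.

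\textbf{The paper's route.} It partitions $\mathbb{R}^d$ into Voronoi cells $V_i$ around the $\mu_i(x)$ and conditions both distributions on each cell, giving $q_i$ and $p_i$. It then bounds $\mathbb{E}[\|\hat s-\mu_i\|^2\mid \hat s\in V_i]\le 2W_2^2(q_i,p_i)+2W_2^2(p_i,\delta_{\mu_i})$. The first term is controlled by a cell-wise estimate $W_2^2(q_i,p_i)\lesssim W_2^2(p_{\theta,\phi},p)/P(\hat s\in V_i)$. The second is controlled by a Gaussian-truncation estimate, which is where separation and the $e^{-D^2/(8\sigma_{\max}^2)}$ term enter. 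Summing over cells produces a factor $2N$, which is absorbed into $C_1,C_2$.

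\textbf{Your route.} You take one global optimal coupling, glue in the mixture index $I$ of the true sample, and use the pointwise inequality $\|\hat s-\mu_{j(\hat s)}\|\le\|\hat s-\mu_I\|$. Here $j(\hat s)$ is the nearest mode, i.e.\ the paper's projection $\phi(\hat s)$. This gives
\[
\mathbb{E}\|\hat s-\phi(\hat s)\|^2\le 2W_2^2(p_{\theta,\phi},p)+2\textstyle\sum_i w_i\,\mathrm{Tr}\,\Sigma_i(x)
\]
immediately. Compared with the paper, you get the constant $2$ instead of $2N$, an averaged rather than maximal trace, no exponential remainder, and no use of hypotheses (1)--(2).

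Your route also avoids the paper's cell-wise transport step, which does not follow from the global bound of Lemma~\ref{lem:wasserstein}. An optimal coupling can move mass across cells, and in general $P(\hat s\in V_i)\neq P(s\in V_i)$.

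Both routes rest on the same Wasserstein lemma, and your Step 1 mirrors it. You are right that turning an averaged noise-MSE into a $W_2$ bound needs regularity of $\epsilon_\theta$. The paper instead uses a linear error recursion that ignores the $s^k$-dependence of $\epsilon_\theta$, plus an uncorrelated-errors assumption. One correction in Step 1: Pinsker on a bounded latent space only gives $W_2^2\lesssim\mathrm{diam}^2\sqrt{\varepsilon_{\text{KL}}}$. To get a term linear in $\varepsilon_{\text{KL}}$ you need Talagrand's inequality, as the paper uses.

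\textbf{Drop the misassignment analysis in Step 2.} Your \emph{single deterministic $j$} can only mean a $j$ determined by $\hat s$. A fixed index fails in general. With a perfect model and two equally weighted modes, $\mathbb{E}\|\hat s-\mu_j\|^2=\mathrm{Tr}\,\Sigma+D^2/2$ for either fixed $j$. That is far larger than $2\,\mathrm{Tr}\,\Sigma$ once $D^2>16\,\mathrm{Tr}\,\Sigma$, as condition (2) forces. So the theorem must be read with $j=j(\hat s)$, exactly as the paper's proof does.

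Under that reading the pointwise inequality holds everywhere, so nothing about the event $\{j\neq I\}$ is needed. This matters, because that estimate would not go through as sketched. A $D/4$ threshold on $\|\xi\|$ only yields a tail of order $e^{-D^2/(32\sigma_{\max}^2)}$ in the worst case, not $e^{-D^2/(8\sigma_{\max}^2)}$. The Cauchy--Schwarz step also needs a fourth moment of $\hat s$, which the hypotheses do not provide.
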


\begin{proof}
    Please refer to Appendix~\ref{proof:onemany} for the detailed proof.
\end{proof}

In essence, Theorem~\ref{thm:single} offers a universal error bound, while Theorem~\ref{thm:multi} provides a stronger, more specialized guarantee for the multi-modal settings our method is designed for.
These results provides strong theoretical support for our approach. Consistent with our theoretical findings, the empirical results depicted in Figure~\ref{fig:tsne} quantitatively verify the high fidelity of our method in state reconstruction.

\subsection{Practical Implementation}
\label{sec: implement}
\paragraph{Architecture}
We adopt a model consisting of repeated convolutional residual blocks to implement the global state diffusion process. 
The overall architecture resembles the types of U-Nets~\citep{ronneberger2015u}, but with two-dimensional spatial convolutions replaced by one-dimensional temporal convolutions. 
Because the model is fully convolutional, the horizon of the inference is determined not by the model architecture but by the input dimensionality.
This model can change dynamically during inference if desired.

\paragraph{Training Mechanism} In the practical implementation, we first train an initial global state diffusion model based on an offline dataset.
Subsequently, during online execution, we continuously update the global state diffusion model with the collected data to compensate for the distribution mismatch between offline and online settings.
This approach ensures that the global state diffusion model plays a role from the early stages of algorithm training, reducing the instability of MARL algorithms caused by the generative model.
In addition, during integration with the CTDE mechanism, we employ the true global state in the policy training phase to reduce computational cost.
During the decentralized execution phase, each agent makes decisions based on the inferred global state.
The overall process is shown in Algorithm~\ref{alg: ind}.

%% file: text/5_experiments.tex
\begin{figure*}[t]
    \centering
    \subfigure{\includegraphics[scale=0.25]{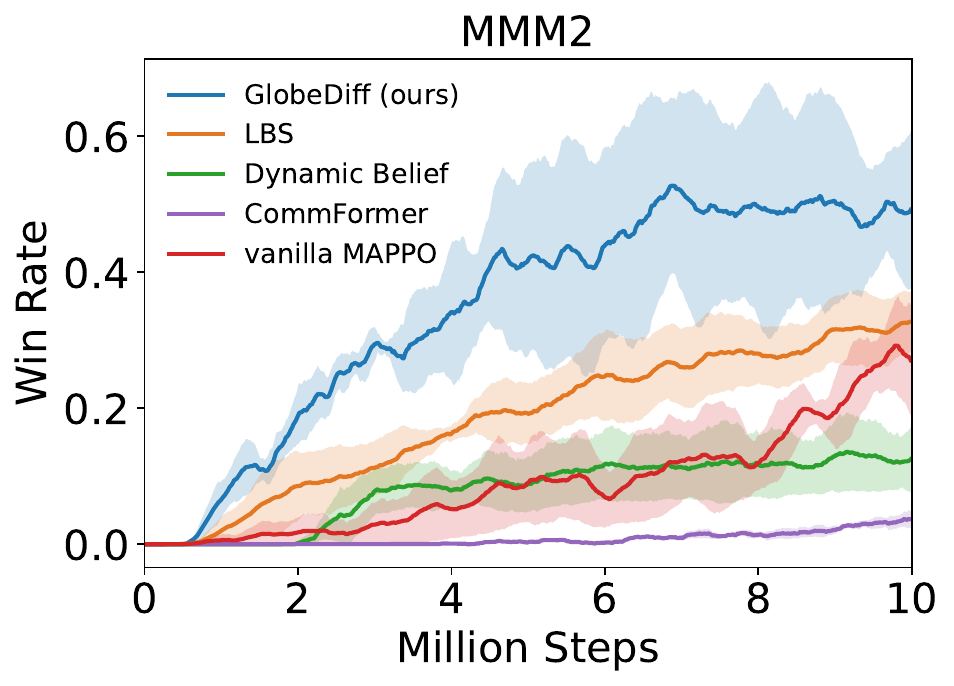}}
    \subfigure{\includegraphics[scale=0.25]{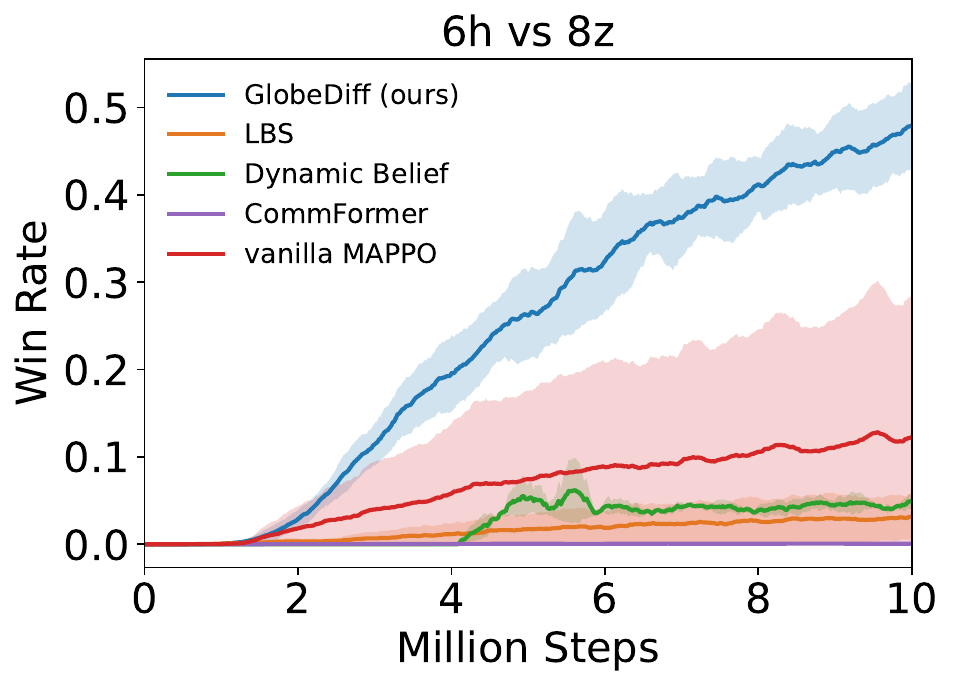}}
    \subfigure{\includegraphics[scale=0.25]{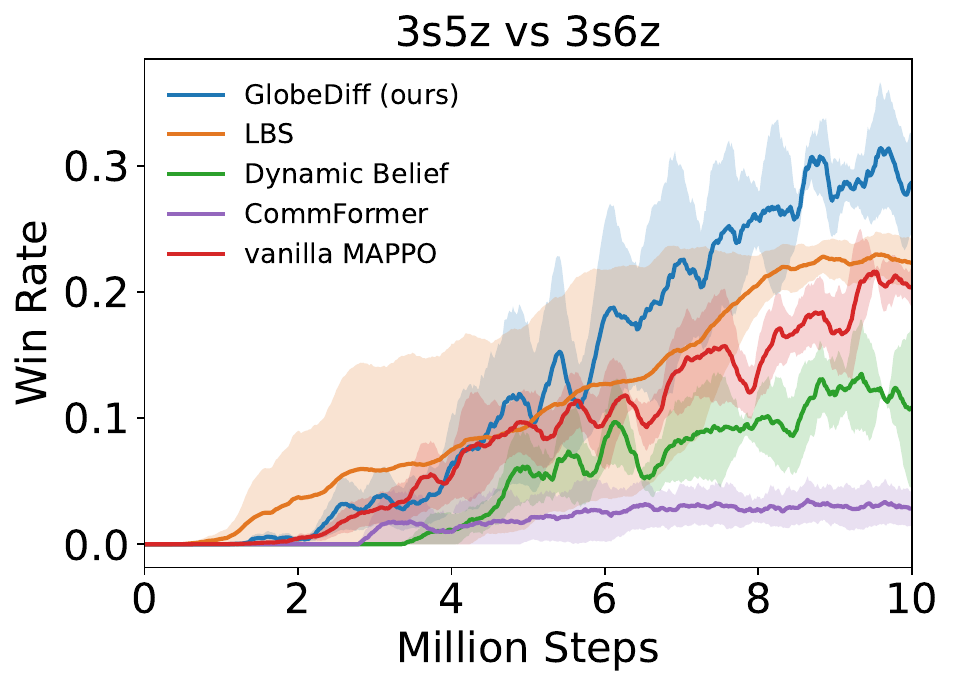}}
    \vspace{-10pt}
    \caption{Comparison results with global state inference baselines in SMAC-v1 (PO) tasks with win rate over three random seeds.}
    \label{results v1 vs baseline}
    \vspace{-10pt}
\end{figure*}

\begin{figure*}[t]
    \centering
    \subfigure{\includegraphics[scale=0.25]{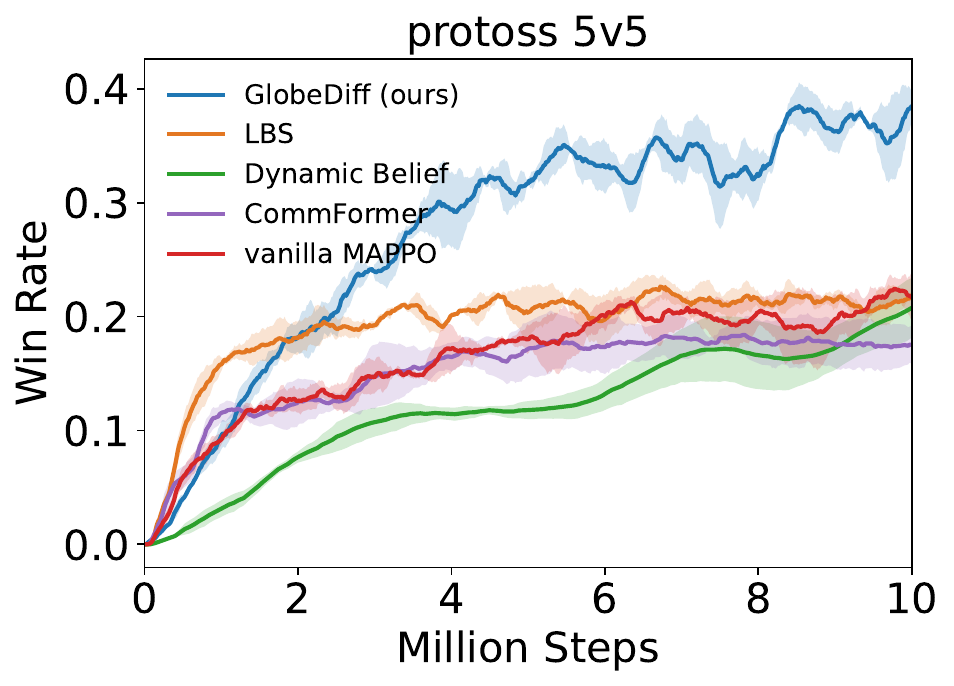}}
    \subfigure{\includegraphics[scale=0.25]{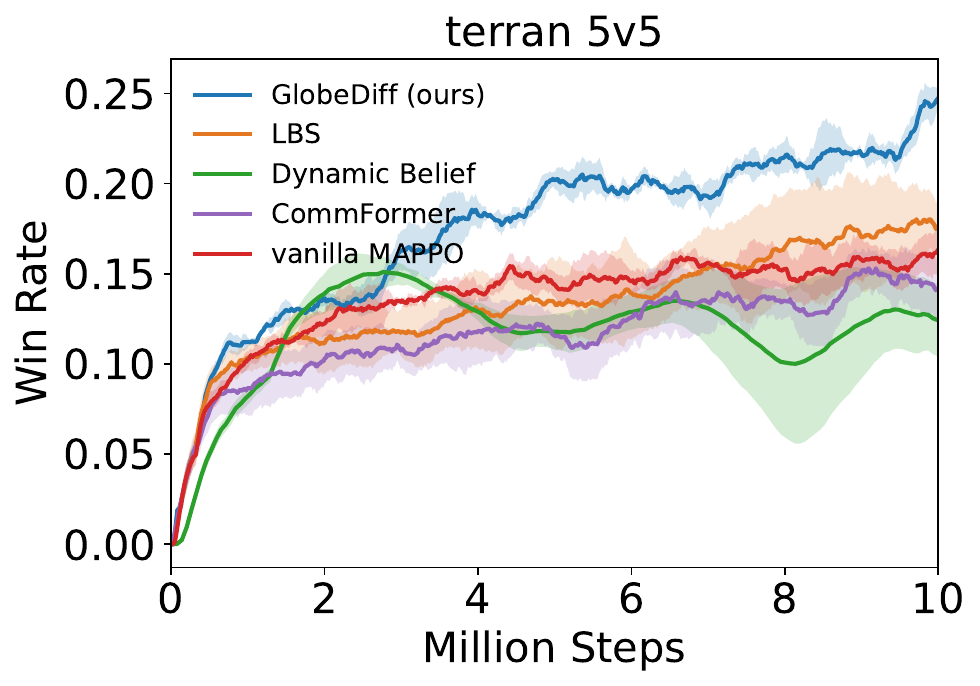}}
    \subfigure{\includegraphics[scale=0.25]{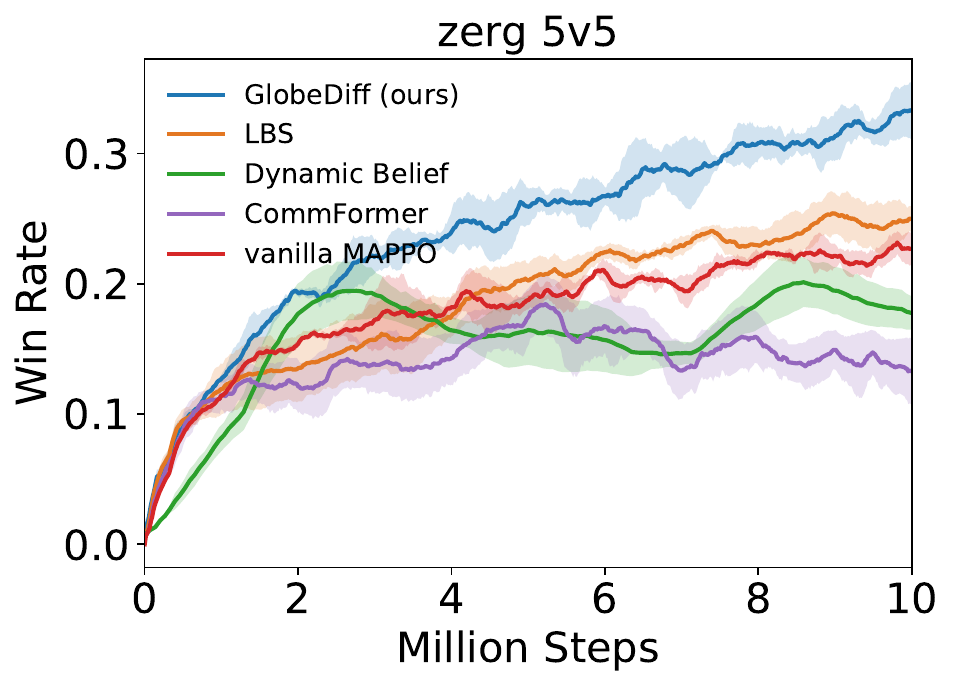}}
    \subfigure{\includegraphics[scale=0.25]{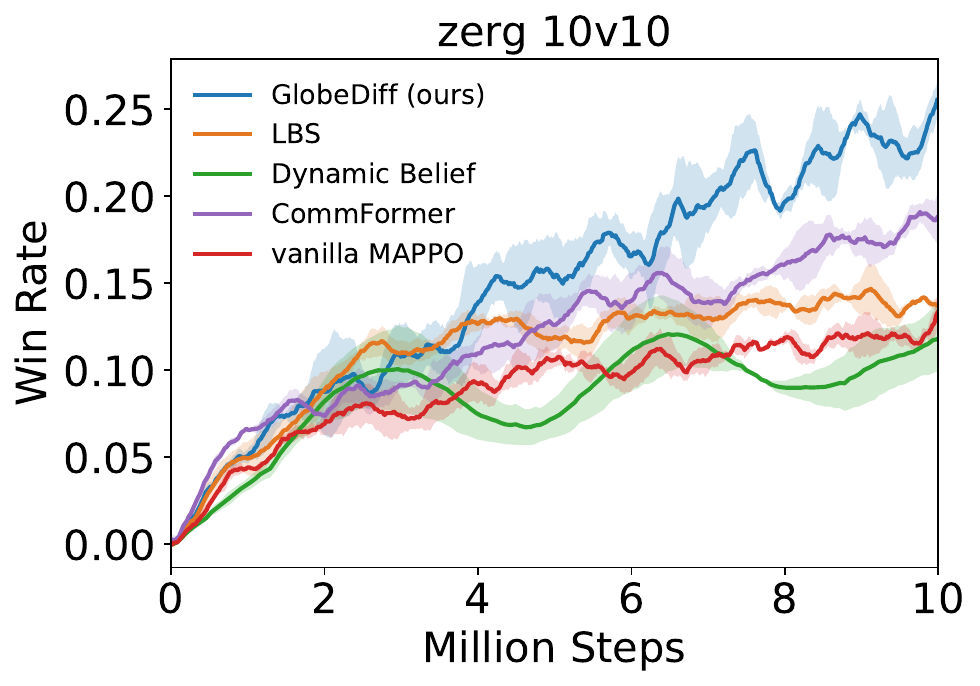}}
    \subfigure{\includegraphics[scale=0.25]{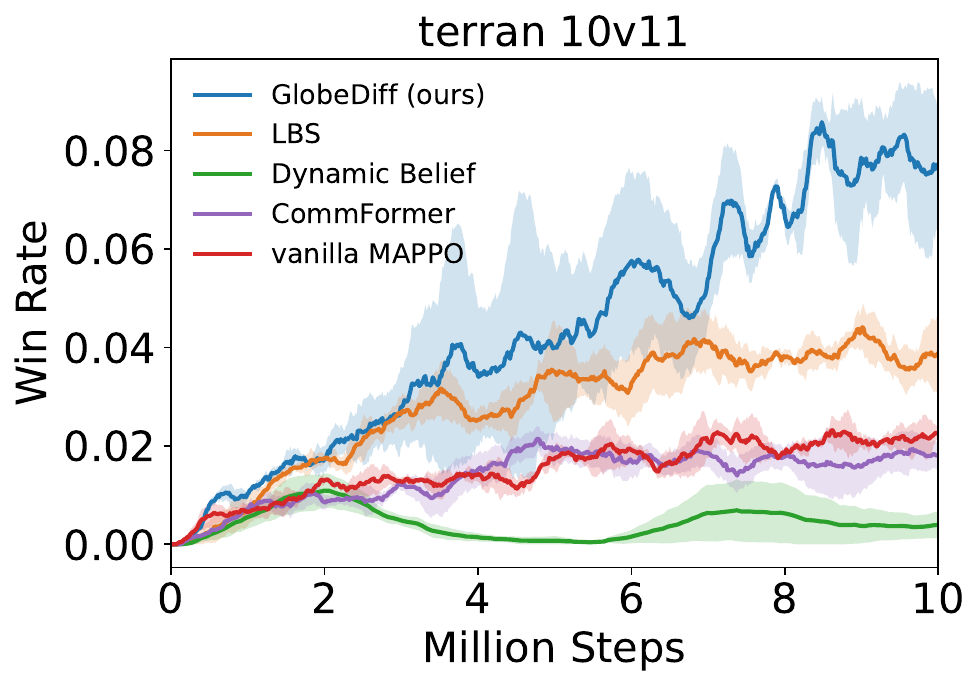}}
    \subfigure{\includegraphics[scale=0.25]{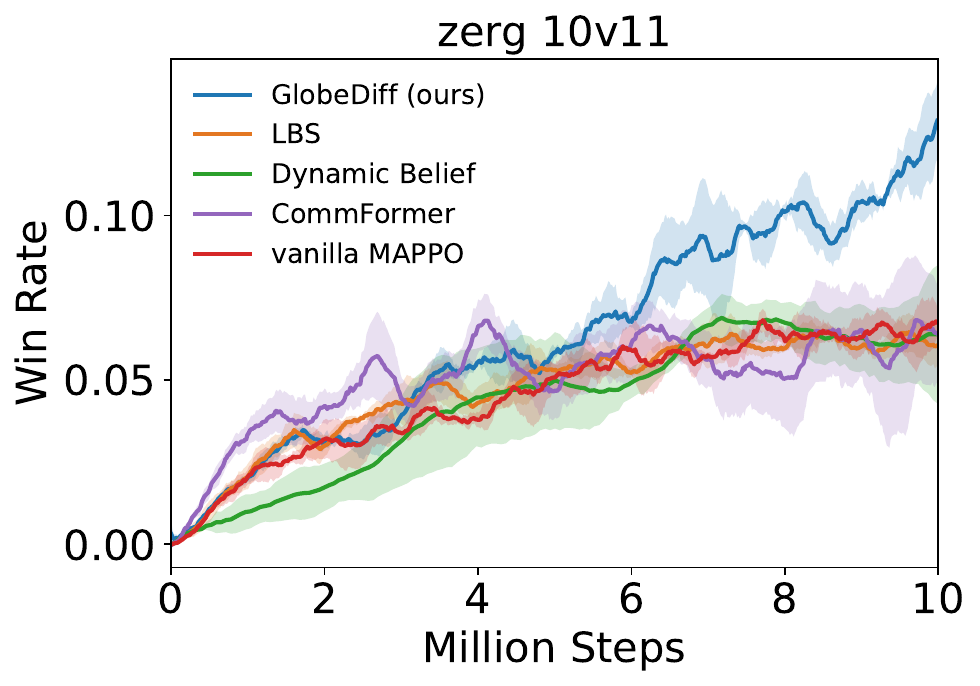}}
    \vspace{-10pt}
    \caption{Comparison results with global state inference baselines in SMAC-v2 (PO) tasks with win rate over three random seeds.}
    \vspace{-10pt}
    \label{results v2 vs baselines}
\end{figure*}

\section{Experiments}
\label{exp}
We designed our experiments to answer the following questions:
\textit{Q1:} Can our method accurately infer the global state from the local observations?
\textit{Q2:} Can the global states generated by our method improve the performance of the MARL algorithm?
\textit{Q3:} Does our method outperform other generative models?


\begin{figure*}[t]
    \centering
    \includegraphics[width=0.8\linewidth]{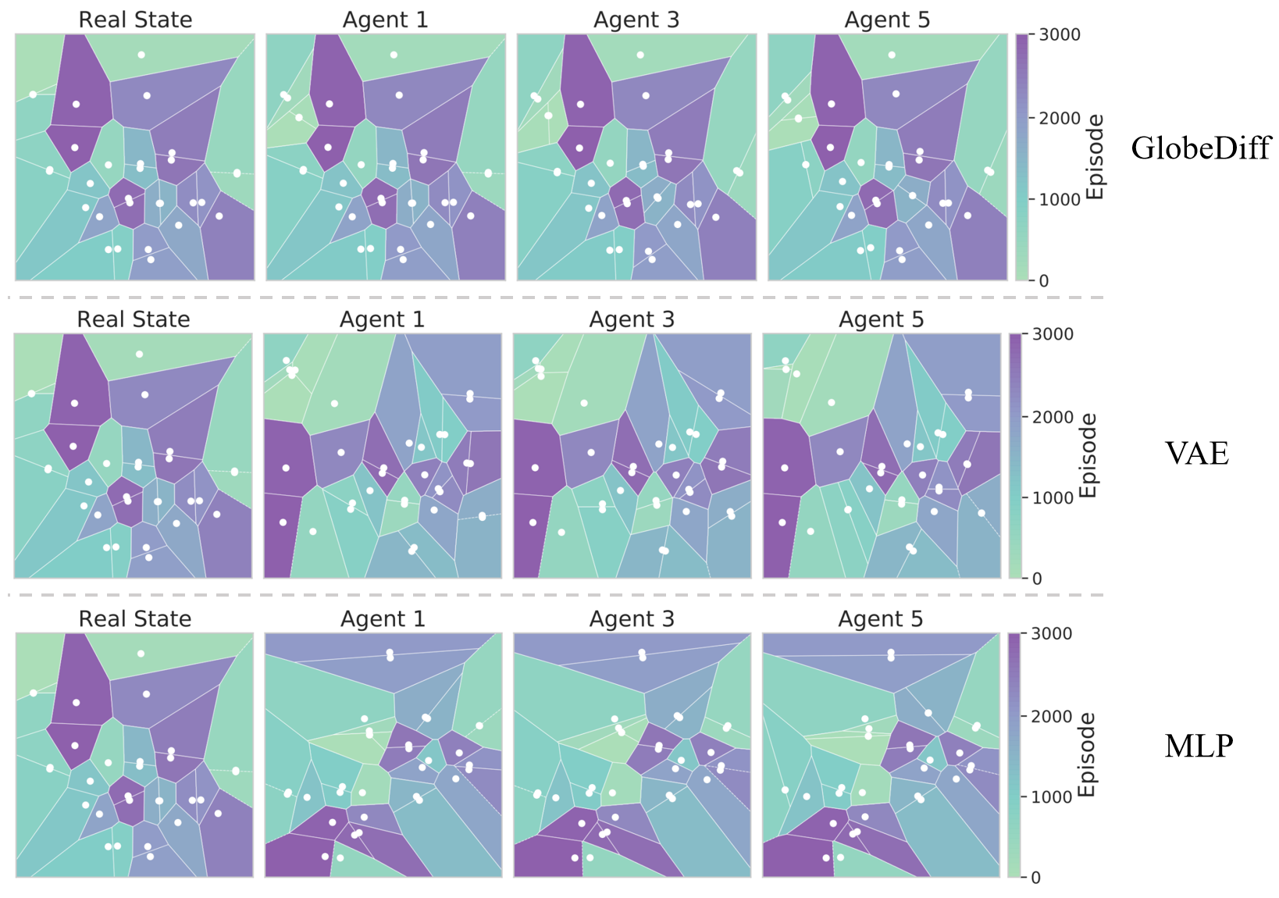}
    \vspace{-10pt}
    \caption{
    \textbf{Visualization of global states generated by GlobeDiff, VAE and MLP.} The first plot displays true states and subsequent plots show inferred states per agent. White points denote individual states with polygons highlighting local neighborhoods. Gradient shading (light green to purple) indicates training progression.
    The similarity between the polygon structures of the inferred and true states reflects the predicted quality.}
    \label{fig:tsne}
    \vspace{-1em}
\end{figure*}

\begin{figure*}[t]
    \centering
    \subfigure{\includegraphics[scale=0.25]{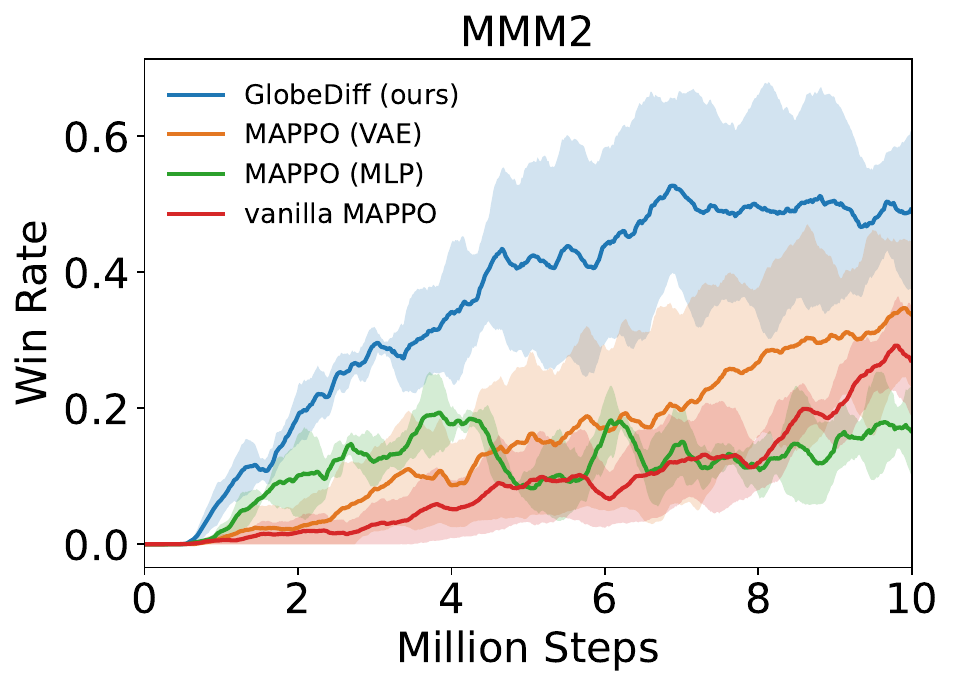}}
    \subfigure{\includegraphics[scale=0.25]{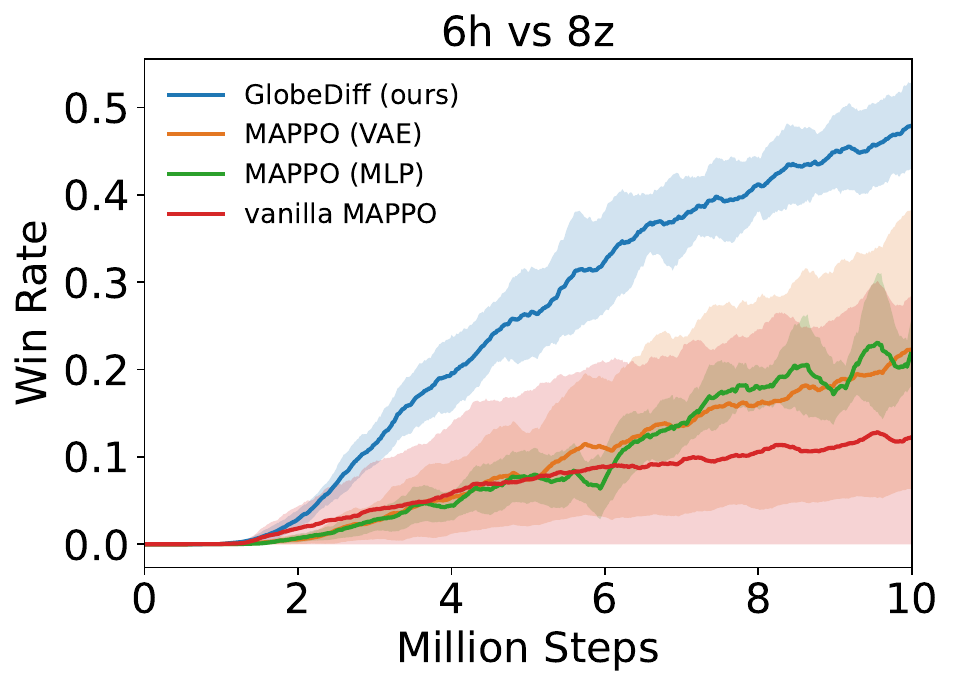}}
    \subfigure{\includegraphics[scale=0.25]{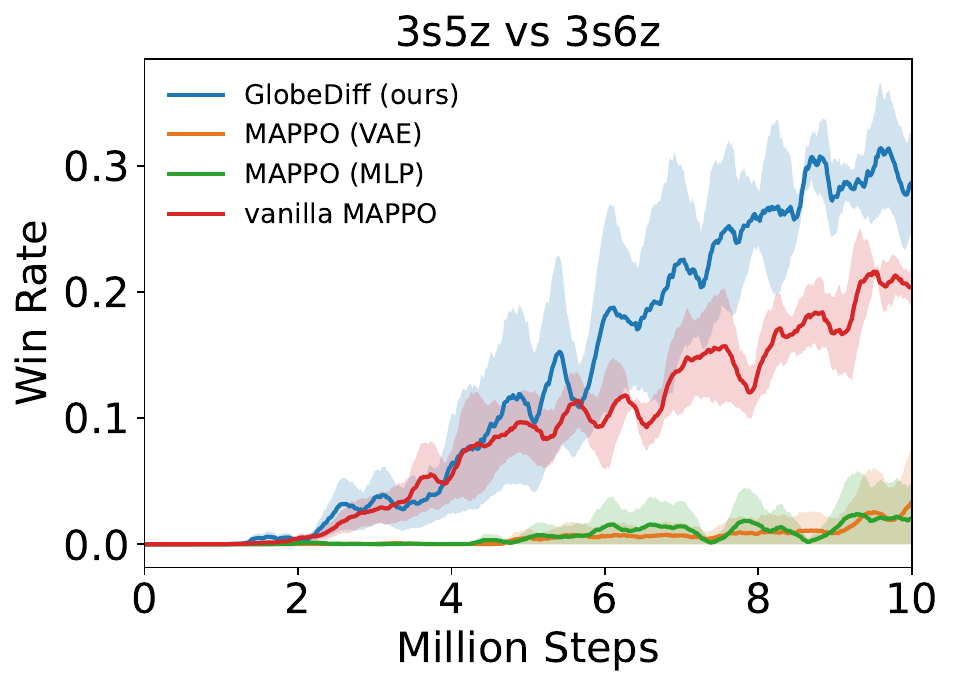}}
    \vspace{-10pt}
    \caption{Comparison results with generative model baselines in SMAC-v1 (PO) tasks with win rate over three random seeds.}
    \label{results v1}
    \vspace{-10pt}
\end{figure*}

\begin{figure*}[t]
    \centering
    \subfigure{\includegraphics[scale=0.25]{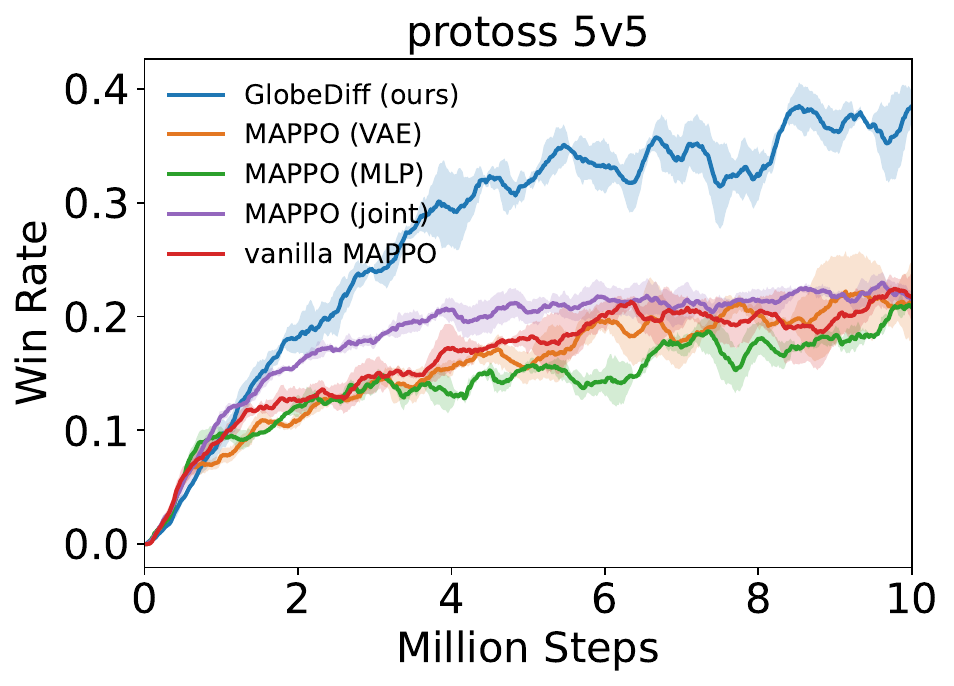}}
    \subfigure{\includegraphics[scale=0.25]{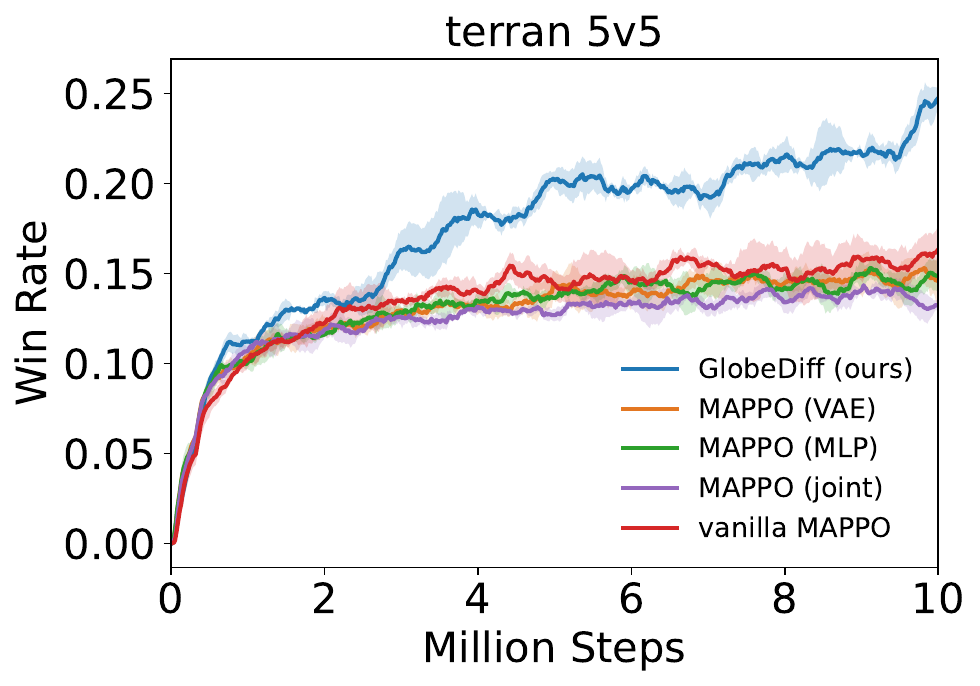}}
    \subfigure{\includegraphics[scale=0.25]{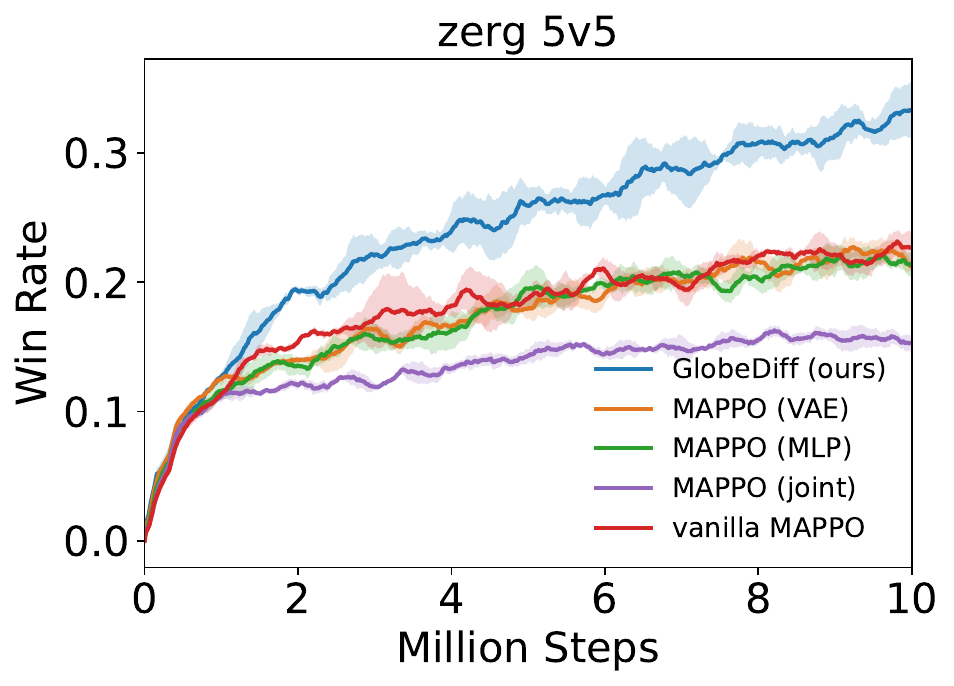}}
    \subfigure{\includegraphics[scale=0.25]{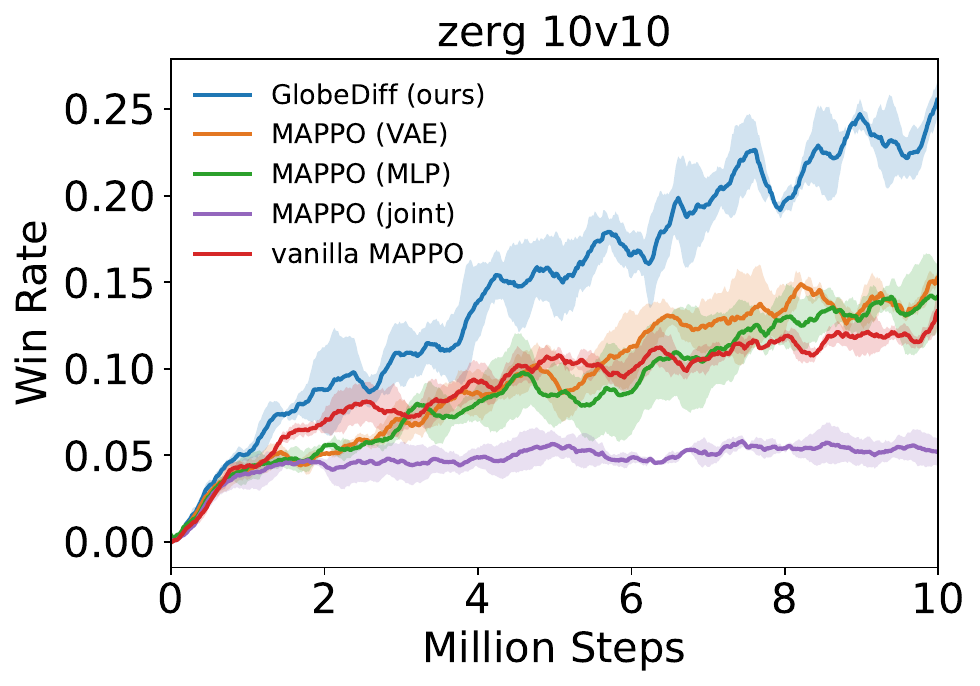}}
    \subfigure{\includegraphics[scale=0.25]{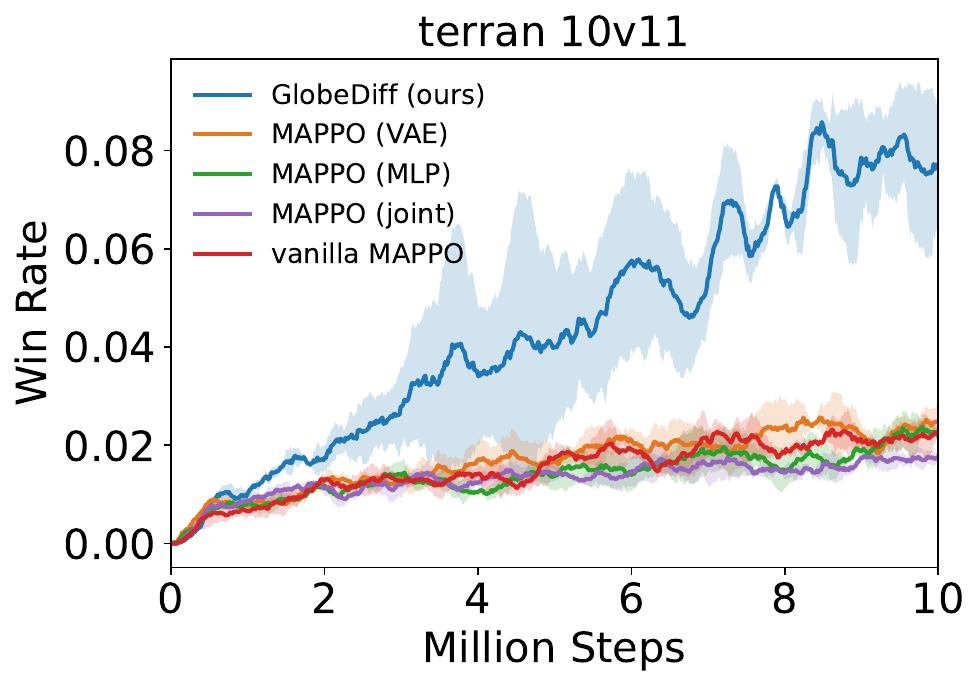}}
    \subfigure{\includegraphics[scale=0.25]{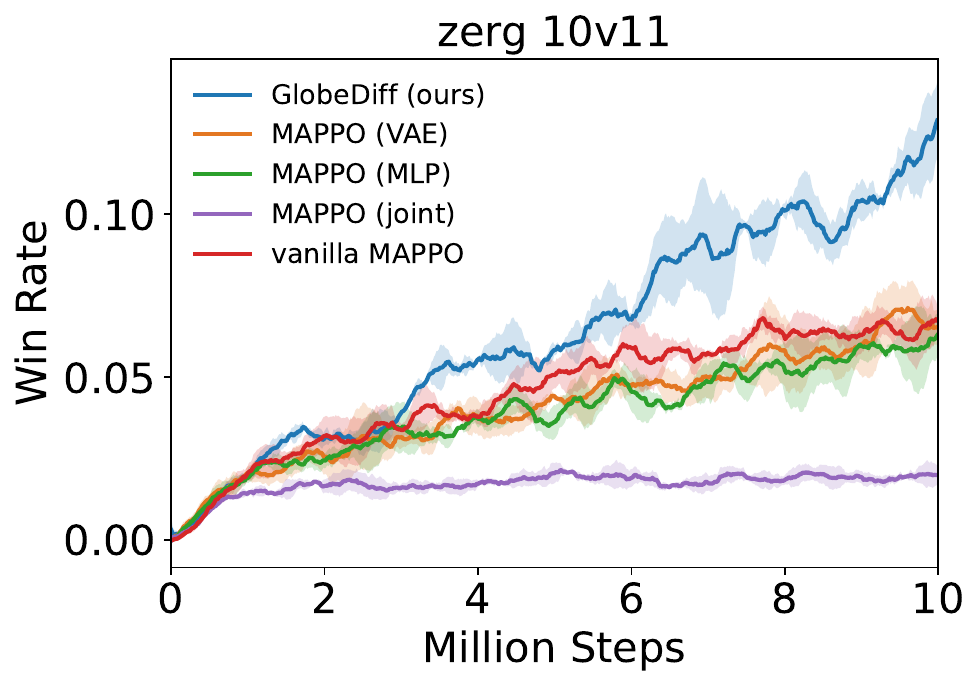}}
    \vspace{-10pt}
    \caption{Comparison results with generative model baselines in SMAC-v2 (PO) tasks with win rate over three random seeds.}
    \label{results v2}
\end{figure*}


\subsection{Setup}
To answer these questions, we evaluate our method and baselines on SMAC, which is a cooperative MARL environment based on the real-time strategy game StarCraft II~\citep{rashid2020monotonic}.
It includes various unique scenarios where agents can obtain local observations within a certain visual radius. 
The objective across all scenarios is to command allied units to eliminate enemy units. 
In addition, SMAC-v2~\citep{ellis2212smacv2} further reduces the correlation between the local observation and the global state by adding random team compositions and random start positions.

\begin{wrapfigure}{r}{5.3cm}
		\includegraphics[width=2.0in]{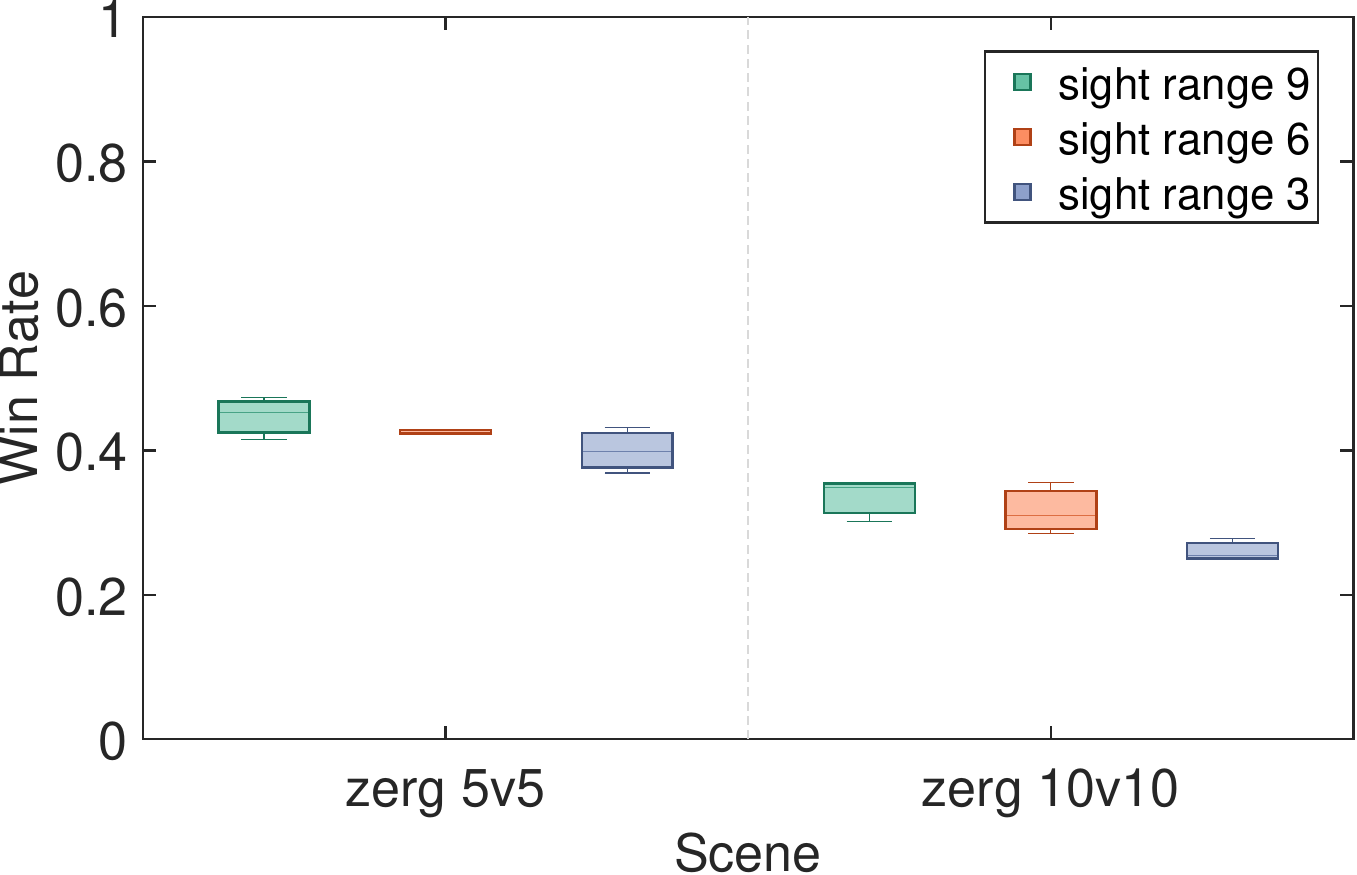}
        \vspace{-0.5em}
		\caption{Win rate with various sight range in original SMAC tasks.}
		\vspace{-1em}
		\label{fig:sight}
	\end{wrapfigure}

\paragraph{Benchmark for Partial Observability} Nevertheless, we are surprised to find that \emph{the original SMAC environment is not well-suited for studying partial observability problems}.
Specifically, we adjust the sight range of each agent from 9 to 3 in SMAC-v2 and run the standard MARL algorithm MAPPO. 
As shown in the Figure~\ref{fig:sight}, the MAPPO's performance exhibits only a marginal decline (merely 0.03 drop) as the observation range narrows. 
This is because the local observations retain sufficient environmental information.
Therefore, we modify the SMAC environment by removing enemy unit types and hit points from local observations to better evaluate partial observability issues, with this adapted environment named as SMAC-v1 (PO) and SMAC-v2 (PO). To ensure fair comparison, all experiments are conducted under identical environmental settings, with three random seeds employed per experiment.

\paragraph{Baselines}
We compare our approach with three representative baselines. Learned Belief Search (LBS)~\citep{hu2021learned} learns an auto-regressive counterfactual belief model to approximate hidden information given the trajectory of an agent and uses a public-private policy architecture with RNNs to encode action-observation histories. Dynamic belief~\citep{zhai2023dynamic} predicts the evolving policies of other agents from recent action-observation histories using a variational inference framework. CommFormer~\citep{hu2024learning} learns a dynamic communication graph via continuous relaxation and attention-based message passing, jointly optimizing the graph and policy parameters.   
In the practical implementation, we combine our method and baselines with the standard MARL algorithm MAPPO.
Please refer to Appendix~\ref{appendix: details} for the detailed implementation setting.

\subsection{Main Results}
\paragraph{Answer for Question 1:}
To intuitively demonstrate the GlobeDiff’s ability to infer the global state, we visualize both the true global states and the states reconstructed by GlobeDiff in the SMAC-V2 (PO) Zerg 5v5 scenario. 
For trajectories sampled during online training, we apply t-SNE~\citep{maaten2008visualizing} to project their high-dimensional states into a two-dimensional space. In Figure~\ref{fig:tsne}, the first plot shows the ground-truth states, while the subsequent plots depict the global states inferred by each agent. Each white point represents a specific state. To facilitate a visual comparison of the underlying structure, we overlay Voronoi polygons~\citep{balzer2005voronoi}. Each polygon defines the region of space closest to a single state point, effectively highlighting the local neighborhood. This allows for an intuitive assessment of accuracy: the more the polygon shapes in an inferred plot resemble those in the ground-truth plot, the better the state representation. The background is shaded with a gradient from light green to deep purple, where darker regions indicate episodes that occurred later in the online training process.

The similarity between the Voronoi polygon structures of the inferred and true states reflects the reconstruction quality. As shown in the Figure~\ref{fig:tsne}, the inferred states closely match the real states, indicating that GlobeDiff effectively enables agents to infer the global state from local information. Moreover, as training progresses (i.e., as the background color deepens), the inferred polygons become increasingly similar to the true ones, demonstrating steady improvement in reconstruction performance. 

\paragraph{Answer for Question 2:}
We conduct experiments on SMAC-v1 (PO) and SMAC-v2 (PO) respectively. Specifically, for SMAC-v1 (PO), we derive auxiliary information based on the individual agent's historical trajectory, as detailed in Equation~\ref{eq: ind}.
For SMAC-v2 (PO), we leverage the communication between agents to construct auxiliary information, as outlined in Equation~\ref{eq: com}.
The experimental results in Figure~\ref{results v1 vs baseline} and Figure~\ref{results v2 vs baselines} show that GlobeDiff consistently and significantly outperforms baseline algorithms in most maps.
This performance gap can be attributed to the constrained capacity of baseline algorithms in modeling complex multi-modal distributions.
For example, LBS tends to gradually accumulate errors when inferring belief states in long-horizon tasks.
The inference process of Dynamic Belief remains unimodal, restricting its ability to capture the multi-modal global state distributions.
CommFormer requires explicit communication and accurate message aggregation, which can be unreliable under severe partial observability.
In contrast, GlobeDiff formulates the state inference process as a multi-model denoising procedure, implicitly modeling complex distributions within the noise network. This offers a highly expressive model that enhances algorithmic performance through accurate global state inference.

\begin{wrapfigure}{r}{5.3cm}
		\includegraphics[width=2.0in]{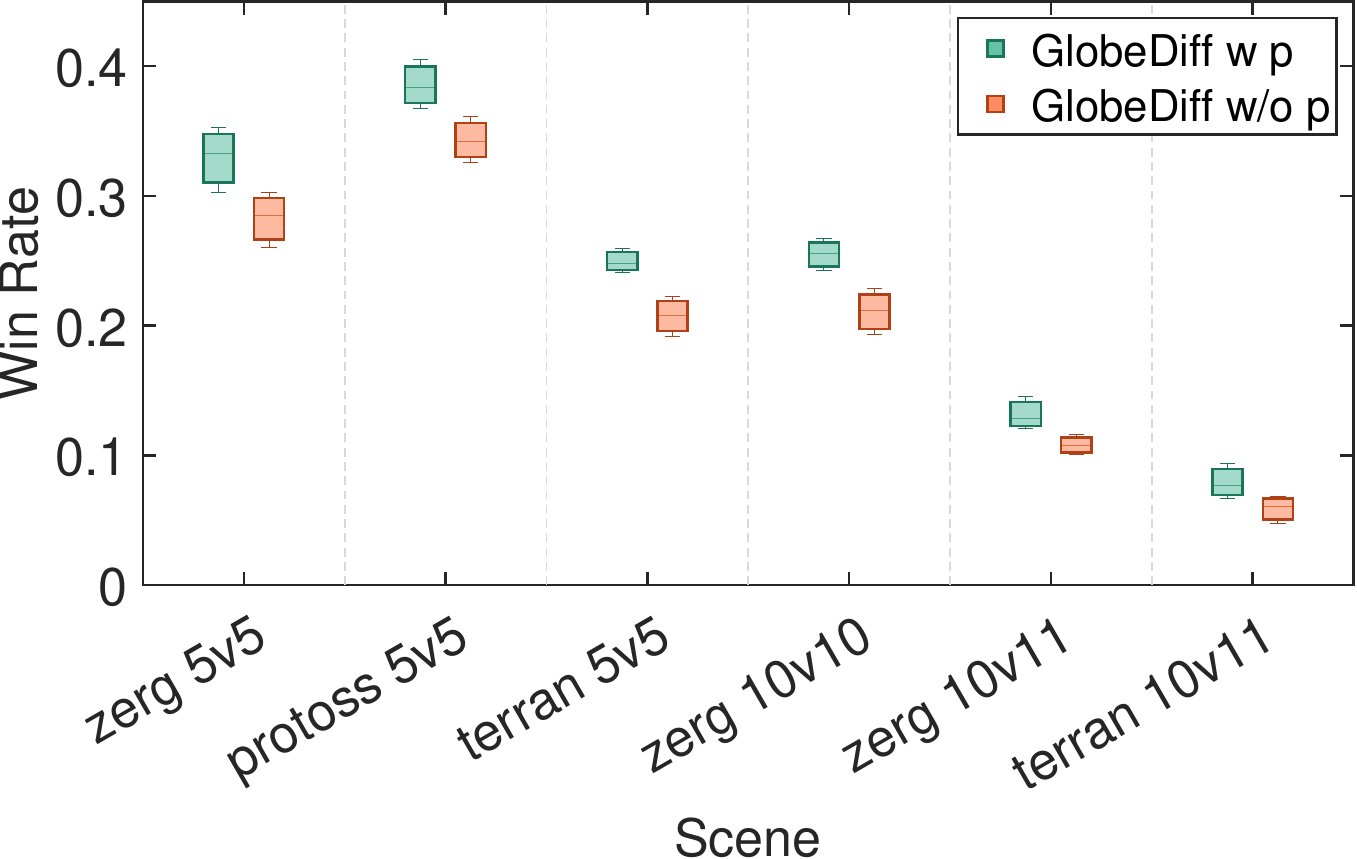}
        \vspace{-0.5em}
		\caption{Ablation of prior network.}
		\vspace{-1em}
		\label{fig:prior ablation}
	\end{wrapfigure}

\paragraph{Answer for Question 3:}
Same with the scenario described in $\textbf{Answer for Question 2}$, we conduct experiments on SMAC-v1 (PO) and SMAC-v2 (PO) respectively. We employ VAE~\citep{2014Auto} and MLP as comparative baselines for generative models. 
Specifically, we replace the GlobeDiff with the conditional VAE or an MLP, while keeping everything else unchanged, and name them respectively as MAPPO~(VAE) and MAPPO~(MLP).
Furthermore, in the SMAC-v2 (PO) scenario, where agents can obtain information from adjacent agents, we incorporate the agents' joint observations as policy input. 
This approach is named as MAPPO~(Joint) and serves as an additional baseline to evaluate the role of generative models in state inference.
Please refer to Appendix~\ref{appendix: details} for detailed description.

The experimental results in Figure~\ref{results v1} and Figure~\ref{results v2} show that GlobeDiff outperforms all baselines on the super-hard maps. 
The MLP and VAE show no significant performance improvement over vanilla MAPPO in most maps, which is attributed to their limited representation capacity. 
Moreover, in SMAC-v2 (PO), MAPPO (Joint) performs worse than vanilla MAPPO in some maps.
This shows the necessity of the global state inference model, which can extract essential features from such high-dimensional inputs.
\vspace{-10pt}

\begin{figure*}[t]
    \centering
    \subfigure{\includegraphics[scale=0.4]{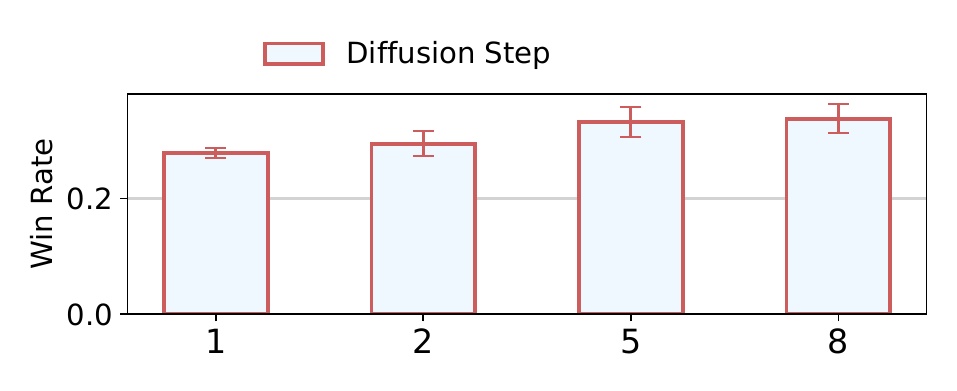}}
    \hspace{1em}
    \subfigure{\includegraphics[scale=0.4]{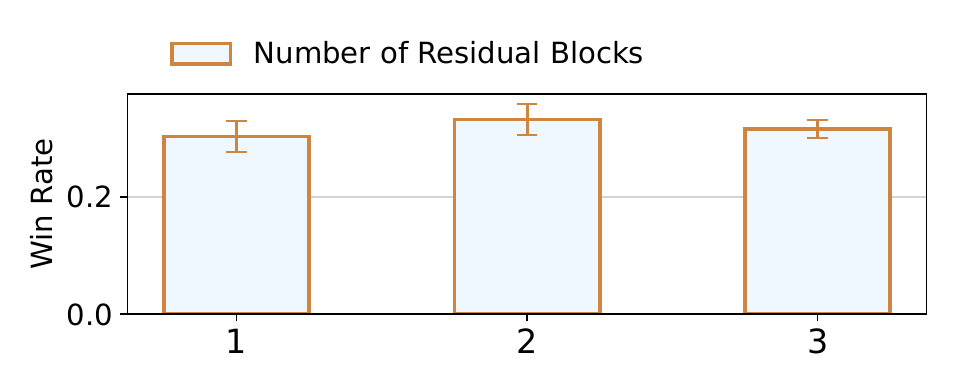}}
    \vspace{-10pt}
    \caption{Ablation study for diffusion step and residual blocks on the zerg 5v5 task.}
    \vspace{-10pt}
    \label{fig: ablation}
\end{figure*}

\paragraph{Ablation Study for prior network}
We conduct the ablation study for the prior network $p_{\phi}$ by removing the KL constraint in Equation~\ref{eq: reverse} and prior network $p_{\phi}$ in training process, which is named \texttt{GlobeDiff w/o p}. 
We conduct experiments on the various maps. The experimental results in the Figure~\ref{fig:prior ablation} indicate that the performance of GlobeDiff can be effectively enhanced by introducing the prior network.
\vspace{-10pt}

\paragraph{Ablation Study for Hyper-parameters}
To study the robustness of GlobeDiff across different hyper-parameters, we conduct the following ablation studies.
Specifically, we change the diffusion step $K$ from 1 to 8.
The experimental results in the left part of Figure~\ref{fig: ablation} show that the state inference is more accurate with the longer denoising steps.
In addition, we conduct ablation studies for the model parameters of the U-Net with various residual blocks.
The experimental results in the right right of Figure~\ref{fig: ablation} show that the model's capacity has a relatively minor impact on the algorithm's performance.
We only need a small model to achieve accurate global state inference.

%% file: text/6_conclusion.tex

\section{Conclusion}
\label{conclusion}
In this paper, we study the partial observability problem in multi-agent systems.
We first propose a generative model-based global state inference framework under two scenarios.
Then, we propose the Global State Diffusion Algorithm (GlobeDiff), which formulates the state inference process as a multi-modal diffusion process.
The theoretical analysis shows that the estimation error of GlobeDiff can be bounded.
Extensive experiments demonstrate that GlobeDiff can not only accurately infer the global state, significantly enhance the algorithm's performance, and be easily integrated with current MARL algorithms.
In the future, we will apply our algorithm to real-world tasks to address the challenges of partial observability in real environments.

\section*{Reproducibility statement}
We have provided the source code in the supplementary materials, which will be made public after the paper is accepted. We have provided theoretical analysis in the Appendix~\ref{appendix: theory}.
We have also provided implementation details in the Appendix~\ref{appendix: details}.

\section*{Acknowledgments}
This work is supported by the National Key R\&D Program of China (No.2022ZD0116405).

%% file: appendix/alg.tex
\clearpage
\section{Algorithm}
\begin{algorithm}[h]
\caption{Global State Diffusion Process}
\begin{algorithmic}[1]
\STATE Initialize the parameters of MARL algorithm and the diffusion model
\STATE Add offline data to online buffer $\mathcal{D}$
\FOR{each episode}
    \FOR{$t \leftarrow 1$ \textbf{to} $T$}
    \STATE Obtain local observation $\bm{o}_t$ 
    \STATE Construct auxiliary state $x_t$
    \STATE Calculate the encoded latent variables $z$ based on $p_{\phi}(z\mid x_t)$
    \STATE Infer the global state $s_t^0$ based on the Eq.~\ref{eq: infer}
    \STATE Each agent makes decision based on the inferred state $s^0_t$
    \STATE Send $\bm{a}_t$ to environment and receive $\bm{o}_{t+1}, s_{t+1}, r_t$
    \STATE Store transitions in replay buffer $\mathcal{D}$
    \ENDFOR
    \STATE Update MARL algorithm
    \IF{Update Global State Diffusion Model}
        \STATE Update global state diffusion model based on the Eq.~\ref{eq: reverse}
    \ENDIF
\ENDFOR
\STATE \textbf{end}
\end{algorithmic}
\label{alg: ind}
\end{algorithm}


%% file: appendix/proof_new_2.tex
\clearpage
\section{Theoretical Analysis}
\label{appendix: theory}

\newtheorem*{reptheorem}{Theorem}
\subsection{Error Bound Analysis for Single Samples with Latent Variable}
\label{proof:single}

\begin{reptheorem}\ref{thm:single} (Single-Sample Expectation Error Bound with Latent Variable)
Assume the trained model satisfies the following two assumptions. (1) Diffusion noise prediction MSE: \( \mathbb{E}_{s^k,x,z,k}[\| \epsilon_\theta(s^k,x,z,k) - \epsilon \|^2] \leq \delta^2 \), (2) Prior alignment: \( D_{\text{KL}}(p_\phi(z\mid x) \| p(z\mid x)) \leq \varepsilon_{\text{KL}} \).
Then, for any generated sample \( \hat{s} \sim p_{\theta,\phi}(s\mid x) = \int p_\theta(s\mid x,z) p_\phi(z\mid x) dz \) and true sample \( s \sim p(s\mid x) \), the expected squared error is bounded by:
\begin{equation}
    \mathbb{E}\left[ \| \hat{s} - s \|^2 \right] \leq 2W_2^2(p_{\theta,\phi}(s\mid x), p(s\mid x)) + 4\text{Var}(s\mid x),
\end{equation}
where \( W_2 \) is the 2-Wasserstein distance between \( p_{\theta,\phi}(s\mid x) \) and \( p(s\mid x) \), \(\text{Var}(s\mid x) = \mathbb{E}_{p(s\mid x)}\left[\| s - \mu_{s\mid x} \|^2\right] \) is the conditional variance and \(\mu_{s\mid x} = \mathbb{E}_{p(s\mid x)}[s] \) is the conditional mean.
\end{reptheorem}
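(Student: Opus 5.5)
The plan is to treat the bound as a purely distributional statement: a triangle-inequality argument routed through an optimal coupling and through the conditional mean $\mu_{s\mid x}$. The training assumptions (1) and (2) are not needed for the inequality itself. They matter only because they are what would later make $W_2(p_{\theta,\phi}(s\mid x),p(s\mid x))$ small, so I will only remark on that. Throughout, I fix $x$, assume both conditional laws have finite second moments (so $W_2$ is finite and an optimal coupling exists), and read the statement as $\hat{s}\sim p_{\theta,\phi}(s\mid x)$ and $s\sim p(s\mid x)$ drawn independently given $x$. That matches how a generated sample is compared with a fresh true sample.

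\textbf{Step 1 (bias--variance split).} Write $\hat{s}-s=(\hat{s}-\mu_{s\mid x})-(s-\mu_{s\mid x})$ and expand the square. The cross term $\mathbb{E}\langle \hat{s}-\mu_{s\mid x},\, s-\mu_{s\mid x}\rangle$ vanishes by conditional independence, because $\mathbb{E}[s-\mu_{s\mid x}]=0$. This gives the identity
\begin{equation*}
\mathbb{E}\|\hat{s}-s\|^2=\mathbb{E}\|\hat{s}-\mu_{s\mid x}\|^2+\mathrm{Var}(s\mid x).
\end{equation*}

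\textbf{Step 2 (transfer to the true law via $W_2$).} Let $\gamma^\star$ be an optimal coupling of $p_{\theta,\phi}(s\mid x)$ and $p(s\mid x)$. Take $(\hat{s}',s')\sim\gamma^\star$, so that $\mathbb{E}\|\hat{s}'-s'\|^2=W_2^2$. Since $\hat{s}'$ has the same law as $\hat{s}$, we have $\mathbb{E}\|\hat{s}-\mu_{s\mid x}\|^2=\mathbb{E}\|\hat{s}'-\mu_{s\mid x}\|^2$. Applying $\|a+b\|^2\le 2\|a\|^2+2\|b\|^2$ with $a=\hat{s}'-s'$ and $b=s'-\mu_{s\mid x}$ gives
\begin{equation*}
\mathbb{E}\|\hat{s}-\mu_{s\mid x}\|^2\le 2W_2^2+2\,\mathrm{Var}(s\mid x).
\end{equation*}

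\textbf{Step 3 (combine).} Steps 1 and 2 together give $\mathbb{E}\|\hat{s}-s\|^2\le 2W_2^2+3\,\mathrm{Var}(s\mid x)$, which is at most the claimed $2W_2^2+4\,\mathrm{Var}(s\mid x)$.

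The only delicate point is Step 1, namely the independence needed to kill the cross term. If one did not want to assume $\hat{s}$ and $s$ independent given $x$, I would drop Step 1 and use the crude bound $\|\hat{s}-s\|^2\le 2\|\hat{s}-\mu_{s\mid x}\|^2+2\|s-\mu_{s\mid x}\|^2$ instead. Combined with Step 2, that only yields $4W_2^2+6\,\mathrm{Var}(s\mid x)$, which is weaker than the statement. So I would make the independence of the generated and true samples explicit, as the natural reading of ``generated sample'' versus ``true sample''.

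Finally, I would remark that assumptions (1) and (2) feed into bounding $W_2$ itself, not into this inequality. Standard diffusion error analyses give a $W_2$ bound that grows with the noise-prediction error $\delta$, and a mismatch $\varepsilon_{\text{KL}}$ in the prior over $z$ adds a further term via Pinsker or transport--entropy inequalities. I would leave that quantitative step outside the present theorem.
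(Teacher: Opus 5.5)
Your proof is correct, but it does not follow the paper's route, and the difference matters.

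The paper never invokes independence. It starts from the crude split $\|\hat{s}-s\|^2\le 2\|\hat{s}-\mu_{s\mid x}\|^2+2\|s-\mu_{s\mid x}\|^2$, which is your fallback. It then bounds $\mathbb{E}\|\hat{s}-\mu_{s\mid x}\|^2=W_2^2(p_{\theta,\phi}(s\mid x),\delta_{\mu_{s\mid x}})$ by $W_2^2(p_{\theta,\phi}(s\mid x),p(s\mid x))+\mathrm{Var}(s\mid x)$. In other words, it applies the triangle inequality to $W_2^2$ instead of $W_2$, and drops the factor $2$ that your Step~2 correctly keeps.

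That step is false in general. On $\mathbb{R}$, take $p(s\mid x)=\mathcal{N}(0,1)$ and $p_{\theta,\phi}(s\mid x)=\mathcal{N}(0,9)$. Then $W_2^2=4$ and $\mathrm{Var}(s\mid x)=1$, yet $\mathbb{E}\hat{s}^2=9>5$. The paper reaches its constants $2$ and $4$ only because this over-optimistic step offsets the lossy first split.

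Your version combines an exact bias--variance identity (from conditional independence) with the correctly weighted coupling bound. It is sound, and it even gives the sharper bound $2W_2^2+3\,\mathrm{Var}(s\mid x)$.

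The independence you impose is not just the natural reading; it is necessary. Keep the same marginals but use the coupling $\hat{s}=-3s$ with $s\sim\mathcal{N}(0,1)$. Then $\mathbb{E}|\hat{s}-s|^2=16>2\cdot 4+4\cdot 1=12$, so the statement fails for arbitrary joint laws. For independent samples, by contrast, $\mathbb{E}|\hat{s}-s|^2=10\le 11$.

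Your remark that assumptions (1) and (2) play no role in the inequality matches the paper, whose proof does not use them either.
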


\begin{proof}
\textbf{Step 1: Error Decomposition via Variance-Bias Tradeoff}\\
Let \( \mu_{s|x} = \mathbb{E}_{p(s\mid x)}[s] \). For any \( \hat{s} \) and \( s \), we expand:
\begin{equation}
\| \hat{s} - s \|^2 = \| (\hat{s} - \mu_{s|x}) - (s - \mu_{s|x}) \|^2.
\end{equation}
By the triangle inequality and Young's inequality:
\begin{equation}
\| \hat{s} - s \|^2 \leq 2\| \hat{s} - \mu_{s|x} \|^2 + 2\| s - \mu_{s|x} \|^2.
\end{equation}
Taking expectation:
\begin{equation}
\mathbb{E}\left[\| \hat{s} - s \|^2\right] \leq 2\mathbb{E}\left[\| \hat{s} - \mu_{s|x} \|^2\right] + 2\mathbb{E}\left[\| s - \mu_{s|x} \|^2\right].
\end{equation}

\textbf{Step 2: Bounding the First Term via Wasserstein Distance}\\
The first term represents the deviation of the generated sample from the true conditional mean. By properties of the Wasserstein distance:
\begin{equation}
\mathbb{E}_{p_{\theta,\phi}(s\mid x)}\left[\| \hat{s} - \mu_{s|x} \|^2\right] \leq W_2^2(p_{\theta,\phi}(s\mid x), \delta_{\mu_{s|x}}) \leq W_2^2(p_{\theta,\phi}(s\mid x), p(s\mid x)) + W_2^2(p(s\mid x), \delta_{\mu_{s|x}}),
\end{equation}
where \( \delta_{\mu_{s|x}} \) is the Dirac delta at \( \mu_{s|x} \). The second term equals \( \text{Var}(s\mid x) \), giving:
\begin{equation}
\mathbb{E}\left[\| \hat{s} - \mu_{s|x} \|^2\right] \leq W_2^2(p_{\theta,\phi}(s\mid x), p(s\mid x)) + \text{Var}(s\mid x).
\end{equation}

\textbf{Step 3: Bounding the Second Term}\\
The second term is exactly the conditional variance:
\begin{equation}
\mathbb{E}\left[\| s - \mu_{s|x} \|^2\right] = \text{Var}(s\mid x).
\end{equation}

\textbf{Step 4: Final Synthesis}\\
Combining all results:
\begin{align}
\mathbb{E}\left[\| \hat{s} - s \|^2\right] &\leq 2\left(W_2^2(p_{\theta,\phi}(s\mid x), p(s\mid x)) + \text{Var}(s\mid x)\right) + 2\text{Var}(s\mid x) \\
&= 2W_2^2(p_{\theta,\phi}(s\mid x), p(s\mid x)) + 4\text{Var}(s\mid x).
\end{align}
which completes the proof.
\end{proof}

\subsection{Connecting Training Loss to Wasserstein Bound with Latent Variable}
\label{proof:wbound}

\begin{lemma}
\label{lem:wasserstein}
Let the following hold. (1) Noise prediction MSE: \( \mathbb{E}_{s^k,x,z,k}[\| \epsilon_\theta(s^k,x,z,k) - \epsilon \|^2] \leq \delta^2 \), (2) KL divergence: \( D_{\text{KL}}(p_\phi(z\mid x) \| p(z\mid x)) \leq \varepsilon_{\text{KL}} \).
Then the Wasserstein-2 distance between \( p_{\theta,\phi}(s\mid x) \) and \( p(s\mid x) \) is bounded by:
\begin{equation}
W_2^2(p_{\theta,\phi}(s\mid x), p(s\mid x)) \leq C_1 K \delta^2 + C_2 \varepsilon_{\text{KL}},
\end{equation}
where \( C_1 = \max_k \left( \frac{1-\alpha^k}{\sqrt{\alpha^k(1-\bar{\alpha}^k)}} \right)^2 \prod_{i=k+1}^K (\alpha^i)^{-1} \), \( C_2 \) is a constant depending on the latent space dimension and geometry, and \( K \) is the total number of diffusion steps.
\end{lemma}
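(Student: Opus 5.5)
The plan is to split the error into a latent-prior part and a diffusion-sampler part, using an intermediate distribution and the triangle inequality for $W_2$. Let $\tilde p(s\mid x)=\int p_\theta(s\mid x,z)\,p(z\mid x)\,dz$ be the output of the trained reverse chain when the latent is drawn from the true prior $p(z\mid x)$. Since $(a+b)^2\le 2a^2+2b^2$,
\begin{equation*}
W_2^2(p_{\theta,\phi},p)\le 2W_2^2(p_{\theta,\phi},\tilde p)+2W_2^2(\tilde p,p).
\end{equation*}
The factors of $2$ will be absorbed into $C_1$ and $C_2$.

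\textbf{Diffusion term $W_2^2(\tilde p,p)$.} I will use a synchronous coupling. For fixed $(x,z)$, run two reverse chains from the same $s^K\sim\mathcal{N}(\mathbf{0},\mathbf{I})$ with the same injected noise at every step. One chain uses $\epsilon_\theta$ and the other uses the ideal noise predictor $\epsilon^\star$, the conditional expectation of $\epsilon$ that reproduces $p(s\mid x,z)$. The one-step difference is $e^{k-1}=\alpha_k^{-1/2}\bigl(e^k-\tfrac{\beta^k}{\sqrt{1-\bar\alpha^k}}(\epsilon_\theta-\epsilon^\star)\bigr)$.

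Unrolling from $k=K$ to $0$, the step-$k$ error is amplified by $\prod_{i<k}(\alpha^i)^{-1/2}$, the product of the later reverse steps' factors. This is the same bookkeeping as in the stated $C_1$, up to that constant's indexing convention. Applying Cauchy--Schwarz to the sum of $K$ terms gives
\begin{equation*}
\mathbb{E}\|e^0\|^2\le K\sum_k c_k\,\mathbb{E}\|\epsilon_\theta-\epsilon^\star\|^2,
\end{equation*}
with $c_k$ exactly the coefficient appearing in $C_1$.

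Two further facts are needed. First, $\mathbb{E}\|\epsilon_\theta-\epsilon^\star\|^2\le\mathbb{E}\|\epsilon_\theta-\epsilon\|^2$, because $\epsilon^\star$ is an $L^2$ projection. Second, the uniform-$k$ average in assumption (1) controls each per-step error up to a factor $K$. Together these give $W_2^2(\tilde p,p)\le C_1K\delta^2$ after renaming constants.

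\textbf{Latent term $W_2^2(p_{\theta,\phi},\tilde p)$.} Both distributions are mixtures of the same kernel $p_\theta(\cdot\mid x,z)$ over different latent laws. Coupling through the latent, by joint convexity of $W_2^2$ over mixtures, gives
\begin{equation*}
W_2^2(p_{\theta,\phi},\tilde p)\le \inf_{\gamma}\mathbb{E}_\gamma\bigl[W_2^2(p_\theta(\cdot\mid x,z),p_\theta(\cdot\mid x,z'))\bigr]\le L^2\,W_2^2(p_\phi(z\mid x),p(z\mid x)).
\end{equation*}
Here I assume the map $z\mapsto p_\theta(\cdot\mid x,z)$ is $L$-Lipschitz in $W_2$. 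For a deterministic-noise coupling this follows from Lipschitzness of $\epsilon_\theta$ in $z$.

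To bring in the KL assumption, I will use Talagrand's $T_2$ inequality, $W_2^2\le \tfrac{2}{\lambda}D_{\mathrm{KL}}$. This holds when $p(z\mid x)$ is $\lambda$-log-concave, for example a Gaussian latent. Note that the assumption bounds $D_{\mathrm{KL}}(p_\phi\|p)$, which is the correct direction for $T_2$ with reference measure $p$. Setting $C_2=2L^2/\lambda$ then encodes the ``latent space geometry.''

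\textbf{Main obstacle.} The hardest step is making the diffusion coupling rigorous. The MSE assumption is stated against the training noise $\epsilon$ under the forward marginals, whereas the error recursion needs $\epsilon_\theta-\epsilon^\star$ evaluated along the reverse chain's own iterates, whose distribution drifts from the forward marginals. My first approach is to assume $\epsilon_\theta$ is Lipschitz in $s^k$ and absorb the resulting feedback into the $\prod(\alpha^i)^{-1}$ amplification factor. Failing that, I would invoke a Girsanov/KL-chain-rule argument that bounds $D_{\mathrm{KL}}(p\|\tilde p)$ by $\sum_k\tfrac{(\beta^k)^2}{2\beta^k\alpha^k(1-\bar\alpha^k)}\mathbb{E}\|\epsilon_\theta-\epsilon\|^2$, and then convert this to $W_2$ via a transport inequality for the data law. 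These extra regularity assumptions should be stated explicitly as implicit in ``$C_1, C_2$ depend on the scheduler and geometry.''
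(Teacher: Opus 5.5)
Your skeleton is the paper's: a triangle-inequality split into a diffusion error and a latent error, an unrolled linear recursion for the first, and Lipschitz-in-$z$ plus Talagrand for the second. The gap is that your diffusion step does not deliver the stated power of $K$. Cauchy--Schwarz over $K$ summands costs one factor of $K$. Assumption (1) is an average over $k\sim\mathcal{U}\{1,\dots,K\}$, so it gives only $\sum_k\mathbb{E}\|\epsilon_\theta-\epsilon^\star\|^2\le K\delta^2$. Combined,
\begin{equation*}
\mathbb{E}\|e^0\|^2\le K\sum_{k=1}^K c_k\,\mathbb{E}\|\epsilon_\theta-\epsilon^\star\|^2\le K\cdot C_1\cdot K\delta^2=C_1K^2\delta^2,
\end{equation*}
and it becomes $C_1K^3\delta^2$ if you first convert the average into per-step bounds $K\delta^2$, as you describe. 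Since $C_1$ is an explicit formula and $K$ is the explicit variable of the bound, ``renaming constants'' cannot absorb this. As written you prove a $K^2$ bound, not the lemma.

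The paper reaches linear $K$ through a hypothesis your argument lacks. It assumes the noise-prediction errors at different steps are uncorrelated, so the cross terms vanish and $\mathbb{E}\|\Delta s^0\|^2=\sum_kA_k^2\,\mathbb{E}\|\Delta\epsilon_k\|^2$; applying (1) to the sum then gives $C_1K\delta^2$. This orthogonality is not automatic under your synchronous coupling. In the linearized recursion, a systematic bias $\epsilon_\theta-\epsilon^\star\equiv b$ gives $\|e^0\|^2=(\sum_kA_k)^2\|b\|^2$, which equals the Cauchy--Schwarz bound $K\sum_kA_k^2\|b\|^2$ when the $A_k$ are equal. To get the statement you must add uncorrelated (e.g.\ martingale-difference) increments as an explicit assumption. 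You must also use the orthogonal expansion instead of Cauchy--Schwarz, and apply (1) to $\sum_k$ rather than step by step.

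Some further points. Your repairs for the drift problem also move the constant.
\begin{itemize}
\item Lipschitz feedback in $s^k$ turns the per-step factor $(\alpha^k)^{-1/2}$ into $(\alpha^k)^{-1/2}(1+\beta^kL_s/\sqrt{1-\bar\alpha^k})$, so the scheduler-only $C_1$ no longer comes out.
\item The KL-chain-rule route is genuinely linear in $K$, since KL is additive over steps with no cross terms; that is its real advantage. But it yields a weight like $\max_k\beta^k/(\alpha^k(1-\bar\alpha^k))$ times a transport constant, not $C_1$. Turning $D_{\mathrm{KL}}(p\|\tilde p)$ into $W_2$ also needs a $T_2$ inequality for $\tilde p$ (the reference measure), not for the data law.
\end{itemize}
Moreover, the Gaussian reverse chain with optimal mean $\epsilon^\star$, started at $\mathcal{N}(\mathbf{0},\mathbf{I})$, does not in general reproduce $p(s\mid x,z)$ exactly at finite $K$. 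So $W_2^2(\tilde p,p)$ carries a discretization/terminal term that the coupling does not see; the paper ignores it too.

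On the positive side:
\begin{itemize}
\item Your $\tilde p$ is a meaningful intermediate, whereas the paper's $\int p(s\mid x,z)\,p(z\mid x)\,dz$ is just $p(s\mid x)$.
\item Your amplification product $\prod_{i<k}$ is the correct one.
\item Your $C_2=2L^2/\lambda$ correctly reflects that the Gaussian $T_2$ constant is dimension-free, contrary to the $O(d_z)$ scaling the paper attributes to Talagrand.
\end{itemize}
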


\begin{proof}
\textbf{Step 1: Single-Step Error Propagation for Diffusion}\\
The reverse process update at step \( k \) conditioned on \( z \) is:
\begin{equation}
s^{k-1} = \frac{1}{\sqrt{\alpha^k}}s^k - \frac{1-\alpha^k}{\sqrt{\alpha^k(1-\bar{\alpha}^k)}} \epsilon_\theta(s^k,x,z,k) + \sqrt{\beta^k}\epsilon.
\end{equation}
The deviation caused by noise prediction error \( \Delta \epsilon_k = \epsilon_\theta - \epsilon \) satisfies:
\begin{equation}
\Delta s^{k-1} = \frac{1-\alpha^k}{\sqrt{\alpha^k(1-\bar{\alpha}^k)}} \Delta \epsilon_k + \frac{1}{\sqrt{\alpha^k}} \Delta s^k.
\end{equation}

\textbf{Step 2: Error Accumulation Over \( K \) Steps}\\
Unrolling the error through all \( K \) steps:
\begin{equation}
\Delta s^0 = \sum_{k=1}^K \left( \prod_{i=k+1}^K \frac{1}{\sqrt{\alpha^i}} \right) \frac{1-\alpha^k}{\sqrt{\alpha^k(1-\bar{\alpha}^k)}} \Delta \epsilon_k.
\end{equation}

Taking the expectation of the squared norm:
\begin{equation}
\mathbb{E}[\|\Delta s^0\|^2] = \mathbb{E}\left[\left\| \sum_{k=1}^K A_k \Delta \epsilon_k \right\|^2\right],
\end{equation}
where \( A_k = \left( \prod_{i=k+1}^K \frac{1}{\sqrt{\alpha^i}} \right) \frac{1-\alpha^k}{\sqrt{\alpha^k(1-\bar{\alpha}^k)}} \).

Expanding the square:
\begin{equation}
\mathbb{E}[\|\Delta s^0\|^2] = \sum_{k=1}^K \|A_k\|^2 \mathbb{E}[\|\Delta \epsilon_k\|^2] + 2\sum_{1 \leq k < l \leq K} \mathbb{E}[\langle A_k \Delta \epsilon_k, A_l \Delta \epsilon_l \rangle].
\end{equation}

Assuming the noise prediction errors at different steps are uncorrelated, the cross terms vanish:
\begin{equation}
\mathbb{E}[\|\Delta s^0\|^2] = \sum_{k=1}^K \|A_k\|^2 \mathbb{E}[\|\Delta \epsilon_k\|^2] \leq \delta^2 \sum_{k=1}^K \|A_k\|^2.
\end{equation}

Now, we need to bound \( \sum_{k=1}^K \|A_k\|^2 \). Let:
\begin{equation}
\|A_k\|^2 = \left( \prod_{i=k+1}^K \frac{1}{\alpha^i} \right) \left( \frac{1-\alpha^k}{\sqrt{\alpha^k(1-\bar{\alpha}^k)}} \right)^2.
\end{equation}

Let \( C_1 = \max_k \left( \prod_{i=k+1}^K \frac{1}{\alpha^i} \right) \left( \frac{1-\alpha^k}{\sqrt{\alpha^k(1-\bar{\alpha}^k)}} \right)^2 \). Then:
\begin{equation}
\sum_{k=1}^K \|A_k\|^2 \leq C_1 K,
\end{equation}
Thus:
\begin{equation}
\mathbb{E}[\|\Delta s^0\|^2] \leq C_1 K \delta^2.
\end{equation}



\textbf{Step 3: Incorporating KL Divergence Error}\\
The KL divergence bound \( \varepsilon_{\text{KL}} \) ensures that the prior \( p_\phi(z\mid x) \) is close to the true posterior \( p(z\mid x) \). By the data processing inequality for Wasserstein distance:
\begin{equation}
W_2^2\left(\int p_{\theta}(s\mid x,z) p_\phi(z\mid x) dz, \int p(s\mid x,z) p(z\mid x) dz\right) \leq C_2 W_2^2(p_\phi(z\mid x), p(z\mid x)),
\end{equation}
where \( C_2 \) depends on the Lipschitz constant of the mapping \( z \mapsto p(s\mid x,z) \).
The constant \( C_2 \) depends on the latent space dimension for the following reasons.
(1) Dimension Scaling of Wasserstein Distance: For distributions in \( \mathbb{R}^{d_z} \), the Wasserstein distance typically scales with \( \sqrt{d_z} \) due to concentration of measure phenomena. This is known as the "curse of dimensionality" in optimal transport.
(2) Talagrand's Inequality: If \( p(z\mid x) \) is log-concave (e.g., Gaussian), then Talagrand's inequality gives:
\begin{equation}
W_2^2(p_\phi(z\mid x), p(z\mid x)) \leq 2C_{\text{TI}} D_{\text{KL}}(p_\phi(z\mid x) \| p(z\mid x)),
\end{equation}
where \( C_{\text{TI}} \) is the Poincaré constant of \( p(z\mid x) \). For isotropic Gaussians in \( \mathbb{R}^{d_z} \), this constant scales as \( O(d_z) \).
(3) Lipschitz Constant: The mapping \( z \mapsto p(s\mid x,z) \) typically has a Lipschitz constant that grows with the dimension \( d_z \) due to the increased complexity of the conditional distribution.
Thus, we can write:
\begin{equation}
W_2^2(p_\phi(z\mid x), p(z\mid x)) \leq C_2' d_z \varepsilon_{\text{KL}},
\end{equation}
where \( C_2' \) is a dimension-independent constant.

\textbf{Step 4: Combined Bound}\\
Combining both error sources using the triangle inequality for Wasserstein distance:
\begin{align}
W_2^2(p_{\theta,\phi}(s\mid x), p(s\mid x)) &\leq 2W_2^2(p_{\theta,\phi}(s\mid x), p(s\mid x,z)p(z\mid x)) + 2W_2^2(p(s\mid x,z)p(z\mid x), p(s\mid x)) \\
&\leq 2C_1 K \delta^2 + 2C_2' d_z \varepsilon_{\text{KL}}.
\end{align}

Absorbing constants into \( C_1 \) and \( C_2' \).
Denote $C_2'$ as $C_2$, and noting that \( C_2' \) depends on \( d_z \), we obtain the final bound:
\begin{equation}
W_2^2(p_{\theta,\phi}(s\mid x), p(s\mid x)) \leq C_1 K \delta^2 + C_2 \varepsilon_{\text{KL}}.
\end{equation}
\end{proof}

\subsection{One-to-Many Mapping Case with Latent Variable}
\label{proof:onemany}

\begin{reptheorem}\ref{thm:multi} (Multi-Modal Error Bound with Latent Variable)
Under the following conditions: 
(1) The true conditional distribution \( p(s\mid x) = \sum_{i=1}^N w_i \mathcal{N}(s; \mu_i(x), \Sigma_i(x)) \) has \( N \) modes with minimum inter-mode distance \( D = \min_{i \neq j} \|\mu_i(x) - \mu_j(x)\| \geq 2\sqrt{d} \).
(2) Mode separation condition: \( D > 4\sqrt{C_1K\delta^2 + C_2\varepsilon_{\text{KL}} + \max_i \text{Tr}(\Sigma_i(x))} \)
(3) The model satisfies \( \mathbb{E}[\|\epsilon_\theta - \epsilon\|^2] \leq \delta^2 \) and \( D_{\text{KL}}(p_\phi(z\mid x) \| p(z\mid x)) \leq \varepsilon_{\text{KL}} \). Then, for any generated sample \( \hat{s} \sim p_{\theta,\phi}(s\mid x) \), there exists a mode \( \mu_j(x) \) such that:
\begin{equation}
\mathbb{E}\left[\|\hat{s} - \mu_j(x)\|^2\right] \leq C_1K\delta^2 + C_2\varepsilon_{\text{KL}} + 2\max_i \text{Tr}(\Sigma_i(x)) + \mathcal{O}\left(e^{-D^2/(8\sigma_{\text{max}}^2)}\right),
\end{equation}
where \( \sigma_{\text{max}}^2 = \max_i \text{Tr}(\Sigma_i(x)) \), and \( C_1, C_2 \) are constants depending on the diffusion scheduler and latent space geometry.
\end{reptheorem}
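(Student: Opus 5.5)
We will reduce the statement to Lemma~\ref{lem:wasserstein} combined with a mode-assignment argument based on an optimal coupling. Write $\varepsilon_W := C_1K\delta^2 + C_2\varepsilon_{\text{KL}}$, so Lemma~\ref{lem:wasserstein} gives $W_2^2(p_{\theta,\phi}(s\mid x),p(s\mid x))\le \varepsilon_W$. Let $\pi^\ast$ be an optimal coupling of $(\hat s,s)$ achieving this distance.

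Represent the true sample through its latent mode index $I\in\{1,\dots,N\}$, with $\mathbb{P}(I=i)=w_i$ and $s\mid I=i\sim\mathcal{N}(\mu_i(x),\Sigma_i(x))$. This attaches to each generated sample the mode $\mu_I(x)$ of its coupled partner. By the inequality $\|a+b\|^2\le 2\|a\|^2+2\|b\|^2$,
\begin{equation*}
\mathbb{E}\|\hat s-\mu_I(x)\|^2\le 2\,\mathbb{E}\|\hat s-s\|^2+2\,\mathbb{E}\|s-\mu_I(x)\|^2\le 2\varepsilon_W+2\max_i\mathrm{Tr}(\Sigma_i(x)).
\end{equation*}
Getting the coefficient $1$ in front of $\varepsilon_W$, as the statement has it, needs a sharper split. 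One option is the weighted Young inequality $(1+\eta)\|a\|^2+(1+\eta^{-1})\|b\|^2$. The other, which I prefer, is to absorb the factor $2$ into $C_1,C_2$, as the paper already does at the end of Lemma~\ref{lem:wasserstein}.

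The statement, however, asks for a \emph{fixed} mode $\mu_j(x)$, or equivalently the nearest mode $j(\hat s)=\arg\min_i\|\hat s-\mu_i(x)\|$, rather than the random index $I$. The bridge is the separation condition (2). On the event $\{\|\hat s - s\|\le D/4\}\cap\{\|s-\mu_I\|\le D/4\}$, the sample $\hat s$ lies within $D/2$ of $\mu_I$. Since $D$ is the minimum inter-mode distance, $\mu_I$ is then the unique nearest mode, so $j(\hat s)=I$.

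On the complementary event I would argue in two parts. First, Markov/Chebyshev bounds the probability of $\|\hat s-s\|>D/4$ by $16\varepsilon_W/D^2$, and condition (2) makes this small. Second, a Gaussian concentration bound (Laurent--Massart style, using $D\ge 2\sqrt d$) controls $\mathbb{P}(\|s-\mu_I\|>D/4)$ by something of order $e^{-D^2/(8\sigma_{\max}^2)}$. In either case $\|\hat s-\mu_{j(\hat s)}\|\le\|\hat s-\mu_I\|$ by definition of the nearest mode, so the error for the nearest mode is pointwise no larger than for $\mu_I$. That pointwise fact alone already transfers the bound to $j(\hat s)$.

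I expect the main obstacle to be the theorem's wording ``there exists a mode $\mu_j(x)$'' if it is read as a single deterministic $j$. In that reading the inequality is false for a balanced mixture, because $\hat s$ spreads over all modes. I would therefore prove it for the sample-dependent nearest mode $j=j(\hat s)$, where the pointwise comparison makes the $\mathcal{O}(e^{-D^2/(8\sigma_{\max}^2)})$ term merely a safety margin. I would also state explicitly that the constants in front of $\varepsilon_W$ are absorbed into $C_1$ and $C_2$.
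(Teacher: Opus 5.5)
Your argument is correct. It bounds the same quantity the paper actually bounds: the paper's proof controls $\mathbb{E}\|\hat s-\phi(\hat s)\|^2$ for the Voronoi (nearest-centre) projection $\phi$. So your reading of ``there exists a mode'' as the sample-dependent $j(\hat s)$ is exactly what is proved there, and your observation that a fixed $j$ fails for a balanced mixture is right.

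The routes differ. The paper conditions on the cells $V_i$ and splits $\mathbb{E}[\|\hat s-\mu_i(x)\|^2\mid \hat s\in V_i]\le 2W_2^2(q_i,p_i)+2W_2^2(p_i,\delta_{\mu_i(x)})$. It then needs two separation-dependent inputs:
\begin{itemize}
\item the approximation of the truncated law $p_i$ by $\mathcal{N}(\mu_i(x),\Sigma_i(x))$;
\item a per-cell transport estimate $W_2^2(q_i,p_i)\le W_2^2(p_{\theta,\phi}(s\mid x),p(s\mid x))/P(\hat s\in V_i)+\mathcal{O}(e^{-D^2/(8\sigma_{\max}^2)})$. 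This is attributed to Lemma~\ref{lem:wasserstein} but does not follow from it, since the lemma controls only the global distance.
\end{itemize}
Summing over cells then produces a factor $2N$, which is absorbed into $C_1,C_2$.

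You instead use a single global optimal coupling, glue on the mixture index $I$, and finish with the pointwise inequality $\|\hat s-\mu_{j(\hat s)}(x)\|\le\|\hat s-\mu_I(x)\|$. You should state the gluing explicitly: $I$ is drawn from its posterior given $s$, conditionally independent of $\hat s$, so that $\mathbb{E}\|s-\mu_I(x)\|^2=\sum_i w_i\mathrm{Tr}(\Sigma_i(x))$. With $\varepsilon_W=C_1K\delta^2+C_2\varepsilon_{\text{KL}}$ as in your notation, this gives $\mathbb{E}\|\hat s-\mu_{j(\hat s)}(x)\|^2\le 2\varepsilon_W+2\max_i\mathrm{Tr}(\Sigma_i(x))$ using only Lemma~\ref{lem:wasserstein} and the mixture form of $p(s\mid x)$. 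Neither the requirement $D\ge 2\sqrt d$ nor condition (2) is used, the exponential term is unnecessary, and the constant is $2$ rather than $2N$. Your route is therefore both more elementary and sharper. What the paper's cell-by-cell decomposition would add, if its per-cell estimate were established, is information on how well each mode is reproduced individually, which one global coupling does not give.

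Two small corrections. First, the weighted Young inequality cannot deliver the coefficients $(1,2)$. It yields $(1+\eta)\varepsilon_W+(1+\eta^{-1})\max_i\mathrm{Tr}(\Sigma_i(x))$, and keeping the second coefficient at most $2$ forces $\eta\ge 1$. So absorbing the factor $2$ into $C_1,C_2$, as the paper does with $2N$, is the only option. Second, the pointwise comparison already finishes the proof, so you should drop the Markov/concentration paragraph. As written it is also inaccurate. Gaussian concentration at radius $D/4$ gives an exponent of order $(D/4-\sigma_{\max})^2/(2\|\Sigma_i(x)\|_{\mathrm{op}})$. For a rank-one $\Sigma_i(x)$ with $\mathrm{Tr}(\Sigma_i(x))=\sigma_{\max}^2$, the tail probability is genuinely of order $e^{-D^2/(32\sigma_{\max}^2)}$, not $e^{-D^2/(8\sigma_{\max}^2)}$.
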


\begin{proof}
\textbf{Step 1: Voronoi Partitioning and Projection Operator}\\
The state space \( \mathbb{R}^d \) is partitioned into \( N \) Voronoi regions \( \{V_i\}_{i=1}^N \) centered at the mode centers \( \{\mu_i(x)\} \). Define the projection operator:
\begin{equation}
\phi(s) = \sum_{i=1}^N \mu_i(x) \cdot \mathbf{1}_{\{s \in V_i\}},
\end{equation}
which maps any point \( s \) to the center of its containing Voronoi region.

\textbf{Step 2: Conditional Distribution Definitions}\\
For each Voronoi region \( V_i \), define the conditional distributions:
\begin{itemize}
    \item True state conditional distribution: \( p_i(s) = p(s \mid x, s \in V_i) \)
    \item Generated state conditional distribution: \( q_i(\hat{s}) = p_{\theta,\phi}(\hat{s} \mid x, \hat{s} \in V_i) \)
\end{itemize}
Under the mode separation condition, the true state conditional distribution \( p_i(s) \) approximates unimodel Gaussian distribution \( \mathcal{N}(\mu_i(x), \Sigma_i(x)) \) with exponentially small error:
\begin{equation}
W_2^2(p_i(s), \mathcal{N}(\mu_i(x), \Sigma_i(x))) \leq \mathcal{O}\left(e^{-D^2/(8\sigma_{\text{max}}^2)}\right).
\end{equation}

\textbf{Step 3: Per-Region Projection Error Bound}\\
For each Voronoi region \( V_i \), consider the conditional expectation of the projection error:
\begin{equation}
\mathbb{E}[\|\hat{s} - \phi(\hat{s})\|^2 \mid \hat{s} \in V_i] = \mathbb{E}[\|\hat{s} - \mu_i(x)\|^2 \mid \hat{s} \in V_i].
\end{equation}
Using the triangle inequality for Wasserstein distance:
\begin{align}
\mathbb{E}[\|\hat{s} - \mu_i(x)\|^2 \mid \hat{s} \in V_i] & = W_2^2(q_i, \delta_{\mu_i(x)}) \\
&\leq \left( W_2(q_i, p_i) + W_2(p_i, \delta_{\mu_i(x)}) \right)^2 \\
&\leq 2W_2^2(q_i, p_i) + 2W_2^2(p_i, \delta_{\mu_i(x)}),
\end{align}
where \( \delta_{\mu_i(x)} \) is the Dirac delta distribution at \( \mu_i(x) \).
For the first term, according to Llama1, we have:
\begin{equation}
    W_2^2(q_i, p_i) \leq \frac{W_2^2(p_{\theta,\phi}(s\mid x), p(s\mid x))}{P(\hat{s} \in V_i)} + \mathcal{O}\left(e^{-D^2/(8\sigma_{\text{max}}^2)}\right) \leq \frac{C_1K\delta^2 + C_2\varepsilon_{\text{KL}}}{P(\hat{s} \in V_i)} + \mathcal{O}\left(e^{-D^2/(8\sigma_{\text{max}}^2)}\right).
    \end{equation}
And for the second term, we have:
\begin{equation}
    W_2^2(p_i, \delta_{\mu_i(x)}) = \mathbb{E}_{p_i}[\|s - \mu_i(x)\|^2] \leq \text{Tr}(\Sigma_i(x)) + \mathcal{O}\left(e^{-D^2/(8\sigma_{\text{max}}^2)}\right) \leq \sigma_{\text{max}}^2 + \mathcal{O}\left(e^{-D^2/(8\sigma_{\text{max}}^2)}\right).
\end{equation}

\textbf{Step 5: Aggregation Over All Regions}\\
Compute the global expectation:
\begin{align}
\mathbb{E}[\|\hat{s} - \phi(\hat{s})\|^2] &= \sum_{i=1}^N P(\hat{s} \in V_i) \cdot \mathbb{E}[\|\hat{s} - \phi(\hat{s})\|^2 \mid \hat{s} \in V_i] \\
&\leq \sum_{i=1}^N P(\hat{s} \in V_i) \left[ 2\left( \frac{C_1K\delta^2 + C_2\varepsilon_{\text{KL}}}{P(\hat{s} \in V_i)} + \mathcal{O}\left(e^{-D^2/(8\sigma_{\text{max}}^2)}\right) \right) + 2\left( \sigma_{\text{max}}^2 + \mathcal{O}\left(e^{-D^2/(8\sigma_{\text{max}}^2)}\right) \right) \right] \\
&= \sum_{i=1}^N \left[ 2(C_1K\delta^2 + C_2\varepsilon_{\text{KL}}) + 2P(\hat{s} \in V_i)\sigma_{\text{max}}^2 + \mathcal{O}\left(e^{-D^2/(8\sigma_{\text{max}}^2)}\right) \right] \\
&= 2N(C_1K\delta^2 + C_2\varepsilon_{\text{KL}}) + 2\max_i \text{Tr}(\Sigma_i(x)) + \mathcal{O}\left(e^{-D^2/(8\sigma_{\text{max}}^2)}\right).
\end{align}
Redefining constants \( C_1' = 2N C_1 \) and \( C_2' = 2N C_2 \), and absorbing factors:
\begin{equation}
\mathbb{E}[\|\hat{s} - \phi(\hat{s})\|^2] \leq C_1K\delta^2 + C_2\varepsilon_{\text{KL}} + 2\max_i \text{Tr}(\Sigma_i(x)) + \mathcal{O}\left(e^{-D^2/(8\sigma_{\text{max}}^2)}\right).
\end{equation}

When \( D \gg \sigma_{\max} \), the exponential term becomes negligible.

\end{proof}

%% file: appendix/additional_results.tex
\clearpage

\section{Additional Experiments}
\subsection{Quantifying the diversity}
We conducted additional experiments to illustrate the semantic representation learned by $z$. Specifically, in the SMAC-v2 Zerg 5v5 environment, we collected observational data across all timesteps and identified both the true global states and the global states generated by GlobeDiff. These state samples were then projected into a two-dimensional plane using t-SNE for visualization, and a 3D density surface was estimated using Kernel Density Estimation (KDE). The experimental results in Figure 11 show that the resulting 3D plots illustrate the normalized probability density ($z$-axis) over the 2D t-SNE embedding ($x$- and $y$-axis).

In the leftmost subplot, we observe that the same observational input often corresponds to multiple real global state clusters at distinct locations in the 2D plane, forming several prominent Gaussian modes. The latent variable $z$ can thus be interpreted as a positional encoding for the coordinates of these Gaussian modes in the embedded space. The three subplots on the right display the global states generated by three selected agents, which faithfully reconstruct the multi-modal characteristics of the true state distribution. Notably, the location, shape, and amplitude of the reconstructed Gaussian modes closely resemble those of the real state distribution, empirically demonstrating $z$'s role in capturing semantic variations.

\begin{figure}[h]
    \centering
    \includegraphics[width=\linewidth]{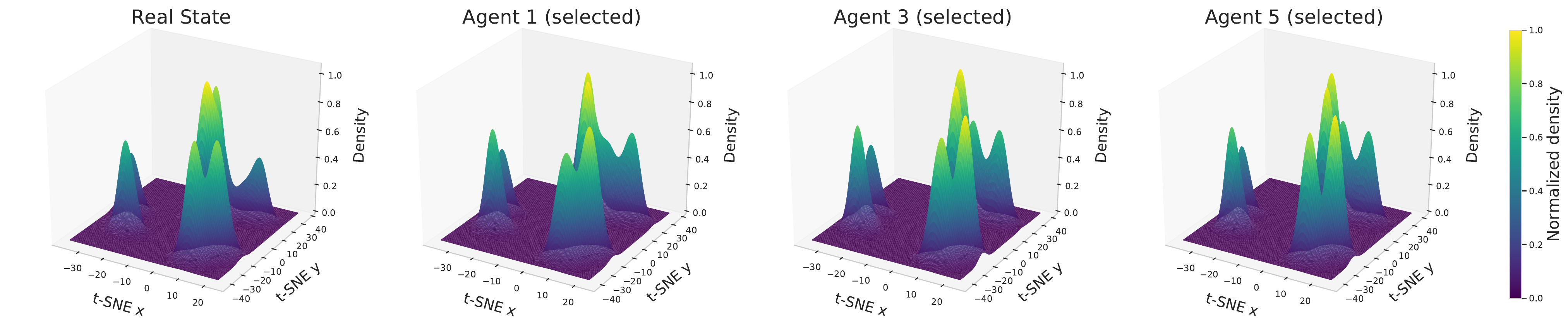}
    \caption{Experiments of the qualitative analysis.}
    \label{fig:multi model distribution}
\end{figure}

\subsection{Experiments under same parameter count}
We conducted a new experiment where we increased the parameter count of the vanilla MAPPO networks to match the total number of parameters in GlobeDiff (which includes both the diffusion model and the RL policy). We calculated the total parameters in GlobeDiff to be approximately 12–14M. Accordingly, we expanded both the MAPPO actor and critic networks to four-layer MLPs with 2048 hidden units per layer, creating a "Vanilla MAPPO (Large)" baseline with a comparable parameter budget (approx. 13.5–14M parameters).

The experimental results in Table~\ref{tab: large para} show that despite this significant increase in capacity, the performance gain of Vanilla MAPPO (Large) remains limited. These results suggest that merely increasing the capacity of a standard recurrent policy is insufficient for it to implicitly learn the complex task of multi-modal global state reconstruction, highlighting the necessity of an explicit generative module like GlobeDiff.

\begin{table}[h]
    \centering
    \begin{tabular}{c|c|c|c}
    \toprule
     & Vanilla MAPPO & Vanilla MAPPO (Large) & GlobeDiff \\
    \midrule
    zerg 5v5&0.22$\pm$0.01 &0.23$\pm$0.00&0.33$\pm$0.02\\
    protoss 5v5&0.21$\pm$0.02 &0.24$\pm$0.01&0.38$\pm$0.01\\
    terran 5v5& 0.16$\pm$0.01 &0.17$\pm$0.00&0.24$\pm$0.01 \\
    zerg 10v10& 0.13$\pm$0.01&0.15$\pm$0.01&0.25$\pm$0.01 \\
    zerg 10v11& 0.06$\pm$0.01&0.07$\pm$0.01&0.12$\pm$0.01\\
    terran 10v11&  0.02$\pm$0.00&0.03$\pm$0.00&0.07$\pm$0.01\\
    MMM2& 0.27$\pm$0.08&0.01$\pm$0.01&0.49$\pm$0.11\\
    3s5z vs 3s6z& 0.20$\pm$0.01&0.01$\pm$0.01&0.28$\pm$0.04\\
    6h vs 8z& 0.12$\pm$0.01&0.01$\pm$0.00&0.47$\pm$0.04\\
    \bottomrule
    \end{tabular}
    \caption{Comparison results between Vanilla MAPPO, Vanilla MAPPO (Large), and GlobeDiff}
    \label{tab: large para}
\end{table}

%% file: appendix/experimental_details.tex
\clearpage
\section{Implementation Details}
\label{appendix: details}
In practical implementation, to mitigate the data bias in the offline dataset and ensure the effective  state inference, we update the GlobeDiff during online learning with an update interval of 50 episodes.
Under the CTDE framework, we select the historical observation length \( m = 3 \).
Please refer to Table~\ref{tab:hyperparams mappo}, Tables~\ref{tab:p} and~\ref{tab:generative} for the detailed hyper-parameters.

The architecture of the Diffusion model consists of a U-Net structure with two repeated residual blocks, as shown in Figure~\ref{U-net}.
Each block consisted of two temporal convolutions, followed by group norm~\citep{wu2018group}, and a final Mish nonlinearity~\citep{misra2019mish}.  
The U-Maze dim refers to the multiplicative factor that reduces the output dimension relative to the input dimension during the down-sampling process.

\begin{figure*}[h]
    \centering
    \includegraphics[width=0.6\linewidth]{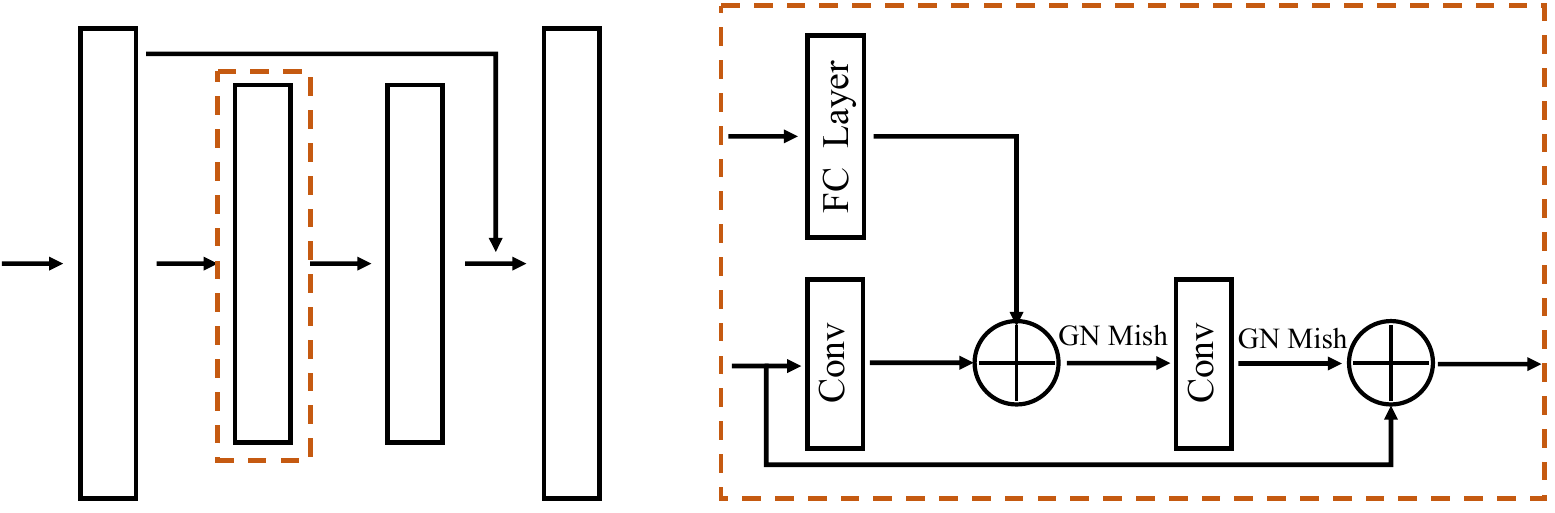}
    \caption{U-Net architecture in Diffusion model.}
    \label{U-net}
\end{figure*}


\begin{table}[htbp]
\centering
\caption{Hyper-parameters for MAPPO.}
\setlength{\tabcolsep}{8pt} 
\renewcommand{\arraystretch}{1.2} 
\begin{tabular}{l c | l c | l c}
\toprule
\textbf{Hyper-parameters} & \textbf{Value} & 
\textbf{Hyper-parameters} & \textbf{Value} & 
\textbf{Hyper-parameters} & \textbf{Value} \\ 
\midrule
Critic Learning Rate         & 5e-4   & Actor Learning Rate          & 5e-4   & Use GAE         & True  \\ 
Gain               & 0.01   & Optim         & 1e-5   & Batch Size      & 64  \\ 
Training Threads   & 4     & Num Mini-Batch    & 1      & Rollout Threads & 8    \\ 
Entropy Coef       & 0.01   & Max Grad Norm     & 10     & Episode Length  & 400   \\ 
Optimizer          & Adam   & Hidden Layer Dim  & 64     &   GAE $\lambda$ & 0.95\\ 
Activation Function  & Relu & PPO Epoch & 15
& $\gamma$ & 0.99\\
\bottomrule
\end{tabular}
\label{tab:hyperparams mappo}
\end{table}

\begin{table}[htbp]
\centering
\caption{Hyper-parameters for prior network $p_{\phi}$ and posterior network $q_{\psi}$.}
\setlength{\tabcolsep}{8pt} 
\renewcommand{\arraystretch}{1.2} 
\begin{tabular}{l c | l c }
\toprule
\textbf{Prior Network $p_{\phi}$} & \textbf{Value} & 
\textbf{Posterior Network $q_{\psi}$} & \textbf{Value} \\ 
\midrule
$z$ dim & 16   & $z$ dim  & 16 \\ 
Hidden Layer Dim    & 1024   & Hidden Layer Dim & 1024  \\
Hidden Layer Num   & 3     & Hidden Layer Num    & 3 \\ 
Activation Function    & Relu     & Activation Function    & Relu \\ 
Learning Rate     & 2e-4     & Learning Rate  & 2e-4   \\ 
Weight Decay     & 1e-4     & Weight Decay  & 1e-4   \\
Batch Size  & 32     &   Batch Size & 32\\ 
\bottomrule
\end{tabular}
\label{tab:p}
\end{table}

\begin{table}[htbp]
\centering
\caption{Hyper-parameters for Generative methods.}
\setlength{\tabcolsep}{8pt} 
\renewcommand{\arraystretch}{1.2} 
\begin{tabular}{l c | l c | l c}
\toprule
\textbf{GlobeDiff} & \textbf{Value} & 
\textbf{VAE} & \textbf{Value} & 
\textbf{MLP} & \textbf{Value} \\ 
\midrule
Down-sampling Factor         & 8   & Latent Dim          & 256   & Hidden Layer Dim         & 1024  \\ 
Diffusion Steps               & 5   & Hidden Layer Dim    & 1024   & Hidden Layer Num     & 4  \\ 
Residual Blocks   & 2     & Activation Function    & Relu      & Activation Function & Relu    \\ 
Learning Rate       & 2e-4   & Learning Rate     & 1e-4     & Learning Rate  & 3e-4   \\ 
Batch Size          & 32   & Batch Size  & 32     &   Batch Size & 32\\ 
\bottomrule
\end{tabular}
\label{tab:generative}
\end{table}
